\def\isarxiv{1} 
\definecolor{mydarkblue}{rgb}{0,0.08,0.45}
\theoremstyle{plain}
\newtheorem{theorem}{Theorem}[section]
\newtheorem{lemma}[theorem]{Lemma}
\newtheorem{definition}[theorem]{Definition}
\newtheorem{proposition}[theorem]{Proposition}
\newtheorem{remark}[theorem]{Remark}
\newcommand{\R}{\mathbb{R}}
\DeclareMathOperator{\poly}{poly}
\DeclareMathOperator{\diag}{diag}
\DeclareMathOperator{\round}{round}
\newcommand*{\RN}[1]{\expandafter\@slowromancap\romannumeral #1@}
\begin{document}

\ifdefined\isarxiv

\date{}

\title{Circuit Complexity Bounds for Visual Autoregressive Model}
\author{
Yekun Ke\thanks{\texttt{ keyekun0628@gmail.com}. Independent Researcher.}
\and
Xiaoyu Li\thanks{\texttt{ 7.xiaoyu.li@gmail.com}. Independent Researcher.}
\and
Yingyu Liang\thanks{\texttt{
yingyul@hku.hk}. The University of Hong Kong. \texttt{
yliang@cs.wisc.edu}. University of Wisconsin-Madison.} 
\and
Zhenmei Shi\thanks{\texttt{
zhmeishi@cs.wisc.edu}. University of Wisconsin-Madison.}
\and
Zhao Song\thanks{\texttt{ magic.linuxkde@gmail.com}. The Simons Institute for the Theory of Computing at UC Berkeley.}
}

\else

\title{Intern Project} 
\maketitle 
\fi

\ifdefined\isarxiv
\begin{titlepage}
  \maketitle
  \begin{abstract}
Understanding the expressive ability of a specific model is essential for grasping its capacity limitations. Recently, several studies have established circuit complexity bounds for Transformer architecture. Besides, the Visual AutoRegressive (VAR) model has risen to be a prominent method in the field of image generation, outperforming previous techniques, such as Diffusion Transformers, in generating high-quality images. We investigate the circuit complexity of the VAR model and establish a bound in this study. Our primary result demonstrates that the VAR model is equivalent to a simulation by a uniform $\mathsf{TC}^0$ threshold circuit with hidden dimension $d \leq O(n)$ and $\mathrm{poly}(n)$ precision. This is the first study to rigorously highlight the limitations in the expressive power of VAR models despite their impressive performance. We believe our findings will offer valuable insights into the inherent constraints of these models and guide the development of more efficient and expressive architectures in the future.

  \end{abstract}
  \thispagestyle{empty}
\end{titlepage}

{\hypersetup{linkcolor=black}
\tableofcontents
}
\newpage

\else

\begin{abstract}

\end{abstract}

\fi

\section{Introduction}

Visual generation has seen widespread applications across various domains, including image restoration  \cite{lhc+25,gld+25}, augmented reality \cite{awt+24}, medical imaging \cite{akh+24,mhl+24,lll+24}, and creative industries such as game development \cite{rhr+20, cgx+25}. By generating realistic and diverse images from textual descriptions or other forms of input, visual generation models are transforming how machines perceive and produce visual content. Among the most popular models for visual generation are Variational AutoEncoders (VAE) \cite{doe16}, Generative Adversarial Networks (GAN) \cite{gpm+20},  Diffusion models \cite{swmg15,hja20}, and Flow-based models \cite{kd18}. These models have made notable progress in producing high-quality, high-resolution, and diverse images, expanding the potential of visual generation through improvements in realism, diversity, and fidelity.

However, the introduction of the Visual AutoRegressive model (VAR) \cite{tjy+24} represents a significant shift in the paradigm in this field. Instead of the traditional ``next-token prediction'', the VAR model adopts a coarse-to-fine ``next-scale prediction'' approach. Through this innovative approach, the VAR model is able to capture visual distributions more effectively, exceeding the performance of diffusion transformers in image generation tasks.  Additionally, VAR’s zero-shot generalization capability spans multiple tasks, including image inpainting and manipulation. These results suggest that VAR  offers a promising direction for autoregressive models in visual generation.

As the VAR model demonstrates its impressive performance, it is crucial to explore the limitations of the expressiveness of the VAR model. Up to now, the expressiveness from a circuit complexity perspective of the VAR model remains underexplored. This gap raises an important question:
\begin{center}
   {\it What are the limitations of the expressive power of the VAR model in terms of circuit complexity?} 
\end{center}

To explore this issue, we apply circuit complexity theory, which offers valuable tools for analyzing the computational resources needed for specific tasks. By representing the VAR model as complexity circuits, we can systematically evaluate their capabilities and determine the lower bounds of the problems they can address.

In this work, we present a comprehensive theoretical investigation into the circuit complexity bounds of the VAR models. Our approach involves analyzing and formulating the architecture of the VAR model and analyzing the computational complexity of its components, such as up-interpolation layers, convolution layers, transformer blocks, etc. Finally, we show that uniform $\mathsf{TC}^0$ circuits can efficiently simulate these models.

The primary contributions of our work are summarized below:

\begin{itemize}
    \item As far as we know, this is the first paper to present a mathematical formulation of the Visual AutoRegressive model (Section~\ref{sec:model_formulation}).
    \item We prove that $\mathsf{DLOGTIME}$-uniform $\mathsf{TC}^0$ circuit family can simulate any Visual AutoRegressive model with $O(1)$ depth, $\poly(n)$ size, and $\poly(n)$ precision (Theorem~\ref{thm:main_theorem}).
\end{itemize}

{\bf Roadmap.} Section~\ref{sec:related_work} offers a summary of the related works. Section~\ref{sec:pre} introduces the necessary notations and definitions for the subsequent analysis. In Section~\ref{sec:model_formulation}, we present the mathematical formulation of the VAR model. Section~\ref{sec:complexity_result} details the circuit complexity results for the VAR model. Section~\ref{sec:conclusion} presents the conclusions of our work.


\section{Related Work}\label{sec:related_work}
\subsection{Circuit Complexity and Neural Network} \label{sec:rw_complexity_and_nn}
In computational theory, circuit complexity \cite{ab09} refers to the classification and analysis of computational problems based on the size and depth of Boolean circuits required to solve them, aiming to understand the inherent difficulty of problems in terms of circuit resources. Crucial to the study of computational complexity is the class $\mathsf{AC}^0$, which consists of decision problems solvable by constant-depth Boolean circuits with unbounded fan-in and the logic gates $\mathsf{AND}$, $\mathsf{OR}$, and $\mathsf{NOT}$; $\mathsf{TC}^0$, which extends $\mathsf{AC}^0$ by incorporating $\mathsf{MAJORITY}$ gates; $\mathsf{NC}^1$ represents the class of problems solvable in parallel by circuits with a depth of $O(\log(n))$, where the gate arity is bounded. These three complexity classes form a hierarchical structure: $\mathsf{AC}^0 \subset \mathsf{TC}^0 \subseteq \mathsf{NC}^{1}$ \cite{mss22}. However, the question of whether $\mathsf{TC}^0 = \mathsf{NC}^0$ remains unresolved.

Circuit Complexity has important applications in understanding the capabilities of deep learning models \cite{pmb19,hah20,lag+22,haf22,mss22,ms23,fzg24,cll+24,lll+24_hop,hlsl24,cll+24_rope,lls+24_tensor}. Specifically, \cite{hah20} investigates the computational boundaries of self-attention, demonstrating that, despite its effectiveness in NLP tasks, it has difficulty modeling periodic finite-state languages and hierarchical structures without scaling up the number of layers or attention heads.
\cite{fzg24} delves into the theoretical underpinnings of Chain-of-Thought (CoT) within LLMs, demonstrating its ability to solve complex tasks like arithmetic and dynamic programming through sequential reasoning process, despite the limitations of bounded-depth Transformers. Recently, \cite{cll+24} shows that Mamba and State-space Models (SSMs) have the same computational limits as Transformers, residing within the $\mathsf{DLOGTIME}$-uniform $\mathsf{TC}^0$ complexity class.

To the best of our knowledge, circuit complexity theory has not yet been used to analyze the computational constraints of Visual AutoRegressive models.

\subsection{Limitation of Transformer Architecture}\label{sec:rw_limitations_of_transformer}
Transformer Architecture has shown remarkable success in various fields, particularly in natural language processing, reinforcement learning, and computer vision. By leveraging self-attention mechanisms to capture long-range dependencies, the Transformer has become the architecture of choice for applications such as machine translation \cite{rt18, wlx+19,yw20} and image generation \cite{pvu+18,dyh+21,tjy+24}. Recently, a series of studies have shed insight into the reasoning limitations of Transformer Architecture \cite{mss22,ms23,fzg24,ms24,wms+24,lss+24,kls+24,hsk+24,chi24,hcl+24,hwl24,hwg+24}. Specifically, \cite{mss22} showed that a generalized form of hard attention can recognize languages that go beyond what the $\mathsf{AC}^0$ class can compute, with the $\mathsf{TC}^{0}$ class serving as an upper bound for the formal languages it can identify. The study by \cite{lag+22} established that softmax-transformers (SMATs) are included in the non-uniform $\mathsf{TC}^{0}$ class. As a next step, \cite{ms23} demonstrated that SMATs belong to $\mathsf{L}$-uniform $\mathsf{TC}^0$ class. Recently, \cite{chi24} demonstrated that average-hard attention transformers (AHATs), without approximation, and SMATs with floating-point precision of $O(\poly(n))$ bits, as well as SMATs with at most $2^{-O(\poly(n)}$ absolute error, can all be classified in the $\mathsf{DLOGTIME}$-uniform $\mathsf{TC}^0$ class.

\subsection{Visual Generation Models}\label{sec:im_gen_models}
Visual generation models have seen significant progress over the past few years, with key advancements in the following mainstream architecture:

\paragraph{AutoRegressive Models.} 
AutoRegressive models for visual generation require the encoding of 2D images into 1D token sequences. Early works in this area, such as PixelCNN \cite{vke+16} and Pixelsnail \cite{cmr+18}, demonstrated the ability to generate pixels in a row-by-row, raster-scan fashion. More later, there are many works that generate image tokens in the raster-scan order, like \cite{rvav19,ero21,lkk+22}. Specifically, VQ-GAN \cite{ero21} uses a GPT-2 decoder-only transformer for image generation. VQVAE-2 \cite{rvav19} and RQ-Transformer \cite{lkk+22} also adopt this raster-scan approach but incorporate additional scales or stacked codes. Recently, \cite{tjy+24} proposed Visual AutoRegressive (VAR) modeling, a new approach to autoregressive image generation, which redefines the process by predicting the next scale in a coarse-to-fine manner. The VAR model not only improves upon traditional methods but also surpasses diffusion transformers in terms of scalability,  inference speed, and image quality.

\paragraph{Diffusion Models.} Diffusion Models have earned recognition for generating high-resolution images by gradually removing noise, exemplified by models such as DiT \cite{px23} and U-ViT \cite{bnx+23}. Typically, these models apply a series of diffusion stages to transform random noise into a coherent image, learning the underlying data distribution through a probabilistic framework. Recent advancements in diffusion-based image generation have focused on improving learning and sampling \cite{se19,sme20,lzb+22,wcz+23,hwl+24,wsd+24,wxz+24,ssz+25_dit,ssz+25_prune,lzw+24}, latent learning \cite{rbl+22,hwsl24} and architecture \cite{hsc+22,px23,xsg+24}.

\section{Preliminary}\label{sec:pre}
The notations used in this paper are introduced in Section~\ref{sec:pre_notation}. Section~\ref{sec:circuit_complexity} explains the basics of circuit complexity classes. Section~\ref{sec:basic_tools} introduces key simulations of floating-point operations, which will be used in later sections for the proofs.

\subsection{Basic Notations}\label{sec:pre_notation}
We apply $[n]$ to represent the set $\{1,2,\cdots, n\}$ for any positive integer $n$. The set of natural numbers is denoted by $\mathbb N := \{0, 1, 2, \ldots\}$. Let $X \in \mathbb{R}^{m \times n}$ be a matrix, where $X_{i,j}$ refers to the element at the $i$-th row and $j$-th column. When $x_i$ belongs to $\{ 0,1 \}^*$, it signifies a binary number with arbitrary length. In a general setting, $x_i$ represents a length $p$ binary string, with each bit taking a value of either 0 or 1.

\subsection{Key Concepts in Circuit Complexity}\label{sec:circuit_complexity}
We discuss several circuit complexity classes, starting with the concept of a boolean circuit.

\begin{definition}[Boolean Circuit, Definition 6.1 in \cite{ab09}]\label{lem:bool_cir}
A Boolean circuit with input size $n$, where $n \in \mathbb{N}$, corresponding to a function that $C_n:\{0,1\}^n \to \{0,1\}$.  This circuit can be typically represented as a directed acyclic graph (DAG). There are $n$ input nodes in the graph, all with an in-degree of $0$. Other nodes are classified as logic gates and are assigned one of the labels $\mathsf{AND}$, $\mathsf{OR}$, or $\mathsf{NOT}$. We use $|C_n|$ to represent the size of $C_n$, referring to the count of nodes in the Boolean circuit.
\end{definition}

Therefore, we can proceed to define the languages recognizable by certain families of Boolean circuits, considering their structural constraints, gate types, and depth. These factors determine the computational power of the circuits in each family.

\begin{definition}[Language, Definition 6.2 in \cite{ab09}] \label{lem:lg_circ}
Let $L \subseteq \{0, 1\}^*$ denote a language. $L$ can be recognized by a Boolean circuits family $\mathcal{C}$ if, for every string $x \in \{0,1\}^{*}$,  a Boolean circuit $C_{|x|} \in \mathcal{C}$ exists, which takes $x$ as input. This circuit has an input length of $|x|$, and $x \in L$ if and only if $C_{|x|}(x) = 1$ holds.
\end{definition}

Next, the concept of complexity classes will be given, which categorizes computational problems based on their inherent difficulty, determined by the resources—such as time or space—required to solve them. In this context, different complexity classes impose constraints on the resources of Boolean circuits, which can be further characterized by factors such as circuit size, depth, number of fan-in, and gate types. We introduce the complexity classes as the following
\begin{itemize}
    \item A language belongs to $\mathsf{NC}^i$ class if it can be decided by a $\poly(n)$ size, $O(\log^{i}(n))$ depth boolean circuits equipped with restricted fan-in basic gates $\mathsf{AND}$, $\mathsf{OR}$ and $\mathsf{NOT}$ gates.
    \item A language belongs to $\mathsf{AC}^i$ class if it can be decided by a $\poly(n)$ size, $O(\log^{i}(n))$ depth boolean circuits equipped with no-limit fan-in basic gates $\mathsf{AND}$, $\mathsf{OR}$ and $\mathsf{NOT}$ gates.
    \item A language belongs to $\mathsf{TC}^i$ class if it can be decided by a $\poly(n)$ size, $O(\log^{i}(n))$ depth boolean circuits equipped with no-limit fan-in basic gates $\mathsf{AND}$, $\mathsf{OR}$, $\mathsf{NOT}$ and $\mathsf{MAJORITY}$ gates.
    \item A language belongs to $\mathsf{P}$ class if it can be decided by a deterministic Turing machine in polynomial time with respect to its input size
\end{itemize}
There is a folklore regarding the hierarchical relationships between the complexity classes mentioned above, for every $i \in \mathbb{N}$: 
\begin{align*}
    \mathsf{NC}^i \subseteq \mathsf{AC}^i \subseteq \mathsf{TC}^i \subseteq \mathsf{NC}^{i+1} \subseteq \mathsf{P}.
\end{align*}
Note that the question of whether $\mathsf{TC}^0 \subsetneq \mathsf{NC}^{1}$ remains an open problem in circuit complexity.

In theoretical computer science, the uniformity of a complexity class refers to whether the circuit family in question can be constructed by a uniform algorithm, i.e., an algorithm that outputs a description of the circuit for any input size. Specifically, $\mathsf{L}$-uniformty requires a Turing machine that uses $O(\log(n))$ space to output a circuit $C$ which can recognize a given language $L \subseteq \{0,1\}^*$. Moreover, $\mathsf{DLOGTIME}$-uniformity stipulates that a random access Turing machine must produce a circuit $C$ that recognizes a given language $L \subseteq \{0,1\}^*$. Except in the case of small circuit complexity classes, where circuits are incapable of simulating the machines that create them, $\mathsf{DLOGTIME}$-uniformity is the same as $\mathsf{L}$-uniformity.  For further discussion on various notions of uniformity, see \cite{bi94, hab02}.

Throughout this work, any reference to a uniform $\mathsf{TC}^0$ should be understood as referring to a $\mathsf{DLOGTIME}$-uniform $\mathsf{TC}^0$.

\subsection{Basic Tools}\label{sec:basic_tools}
In this section, we first define floating-point numbers and then illustrate a series of operations involving them. Finally, we analyze the circuit complexity associated with these operations, which is essential in the later proof.

\begin{definition}[Floating point number, Definition 9 in \cite{chi24}]\label{def:float_point}
    Let $p$ be an integer representing precision. Let $m \in (-2^p,-2^{p-1}] \cup \{0\} \cup [2^{p-1},2^p)$ denote an integer called the significance. Let $e \in  [-2^p, 2^p)$ denote an integer called the exponent. A  floating point number with $p$-bits is composed of the parts $m$ and $e$, and its value is given by $m \cdot 2^e$. Throughout this paper, the set of all $p$-bit floating-point numbers is denoted by $\mathbb{F}_p$.
\end{definition}

Then, we move forward to define the round operation of float point numbers. 
\begin{definition}[Rounding Operation, Definition 9 in \cite{chi24}]
    Given a floating point number $x$, we use $\round_p(x)$ to denote the nearest number to $x$ which is $p$-bit floating-point.
\end{definition}

For the definitions of addition, multiplication, division, comparison, and floor operations on floating-point numbers as outlined in Definition~\ref{def:float_point}, refer to \cite{chi24}. In this paper, we introduce the corresponding circuit complexity classes to which these operations belong.

\begin{lemma}[Operations on floating point numbers in $\mathsf{TC}^0$, Lemma 10  and Lemma 11 of~\cite{chi24}]\label{lem:float_operations_TC} 
Assume the precision $p \leq \poly(n)$. Then we have:
\begin{itemize}
    \item Part 1. Given two $p$-bits float point numbers $x_1$ and $x_2$. Let the addition, division, and multiplication operations of $x_1$ and $x_2$ be outlined in \cite{chi24}. Then, these operations can be simulated by a size bounded by $\poly(n)$ and constant depth bounded by $d_{\rm std}$ $\mathsf{DLOGTIME}$-uniform threshold circuit.
    \item Part 2. Given $n$ $p$-bits float point number $x_1,\dots,x_n$. The iterated multiplication of $x_1, x_2\dots, x_n$ can be simulated by a size bounded by $\poly(n)$ and constant depth bounded by $d_\otimes$ $\mathsf{DLOGTIME}$-uniform threshold circuit.
    \item Part 3. Given $n$ $p$-bits float point number $x_1,\dots,x_n$. The iterated addition of $x_1, x_2\dots, x_n$ can be simulated by a size bounded by $\poly(n)$ and constant depth bounded by $d_\oplus$ $\mathsf{DLOGTIME}$-uniform threshold circuit. To be noticed, there is a rounding operation after the the summation is completed.
\end{itemize}
\end{lemma}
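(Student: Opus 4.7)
The plan is to reduce each floating-point operation to underlying integer arithmetic subroutines that are already known to lie in $\mathsf{DLOGTIME}$-uniform $\mathsf{TC}^0$. Specifically, I would invoke the classical results of Hesse--Allender--Barrington, which place integer iterated multiplication and integer division in uniform $\mathsf{TC}^0$, together with the folklore facts that integer addition, comparison, subtraction, and bit-shifting by an integer amount lie in uniform $\mathsf{TC}^0$ at constant depth. Every floating-point operation is assembled from a constant number of these subroutines, which, since the precision satisfies $p\leq\poly(n)$, keeps all intermediate integers polynomially bounded in length and therefore yields constant depth and $\poly(n)$ size overall.

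For Part 1, pairwise addition of $x_1=m_1\cdot 2^{e_1}$ and $x_2=m_2\cdot 2^{e_2}$ is implemented by comparing $e_1$ and $e_2$ (integer comparison, $\mathsf{TC}^0$), shifting the smaller-exponent significance by $|e_1-e_2|$ bits with a barrel shifter, adding the two $O(p)$-bit aligned integers, renormalizing, and applying $\round_p$. Pairwise multiplication and division are even simpler: multiply or divide the significances using the pairwise integer multiplication/division circuits, add or subtract the exponents, then renormalize and round. For Part 2, iterated multiplication of $x_i=m_i\cdot 2^{e_i}$ decomposes into integer iterated multiplication of the significances $\prod_i m_i$ (Hesse--Allender--Barrington) and integer iterated addition of the exponents $\sum_i e_i$ (in $\mathsf{TC}^0$); a single normalization and $\round_p$ at the end produces the $p$-bit output.

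Part 3 is the main obstacle, because iterated floating-point addition requires aligning all summands to a common exponent before they can be added as integers. My plan is: (i) compute $e^*:=\max_i e_i$ via pairwise comparisons combined by unbounded fan-in $\mathsf{AND}/\mathsf{OR}$ gates, giving constant depth; (ii) compute each shift amount $e^*-e_i$ and apply a barrel shifter to $m_i$, producing aligned signed integers of $\poly(n)$ bit length (summands for which $e^*-e_i$ exceeds the working precision are zeroed out by a parallel integer comparison); (iii) invoke the integer iterated-addition circuit to sum them; (iv) renormalize the resulting integer and apply $\round_p$ to recover a $p$-bit floating-point number with combined exponent $e^*$ (plus a normalization shift). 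Composing a constant number of such constant-depth gadgets preserves constant depth and $\poly(n)$ size, which completes the proposal; the only subtlety to verify carefully is that the rounding step at the end matches the definition of $\round_p$ stated earlier, so that the final output genuinely lies in $\mathbb{F}_p$.
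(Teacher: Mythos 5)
The paper does not prove this lemma at all: it is stated as a direct citation of Lemmas 10 and 11 of \cite{chi24}, and no argument is given in the text. So your proposal is necessarily a ``different route'' in the trivial sense that you are supplying a proof where the paper supplies only a reference. That said, what you sketch is essentially the standard reduction underlying the cited result (and underlying earlier treatments of floating-point arithmetic in $\mathsf{TC}^0$): reduce each floating-point operation to a constant number of integer subroutines --- comparison, barrel shift, addition, iterated addition, and the Hesse--Allender--Barrington circuits for integer iterated multiplication and division --- each of which is $\mathsf{DLOGTIME}$-uniform $\mathsf{TC}^0$ at constant depth and $\poly(n)$ size once $p \le \poly(n)$. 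Your decomposition of Parts 1 and 2 is correct and matches the usual argument.

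One point in Part 3 deserves a more careful statement. Aligning all summands to the maximum exponent $e^*$ and then calling integer iterated addition is the right idea, but the amount of ``working precision'' you carry matters: if you truncate each shifted significand to only $p$ bits below $e^*$, then $n$ discarded tails, each individually negligible, can jointly be as large as $n \cdot 2^{e^*-p}$ and change the rounded answer. The fix is to carry $p + O(\log n)$ guard bits (still $\poly(n)$ total), so the accumulated truncation error stays below one unit in the last place before the final $\round_p$; this is exactly why the lemma's Part 3 explicitly flags that a single rounding is performed after the whole summation rather than after each pairwise step. You gesture at this (``summands for which $e^*-e_i$ exceeds the working precision are zeroed out''), but you should state the guard-bit count explicitly and verify that the one final $\round_p$ then matches Definition~\ref{def:float_point} and the semantics of iterated addition as defined in \cite{chi24}. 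With that clarification, your sketch is a faithful reconstruction of the cited argument; the paper's ``proof'' buys brevity by outsourcing all of this to \cite{chi24}, while yours buys self-containedness at the cost of having to be precise about exactly this rounding and guard-bit bookkeeping.
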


Then, we show a lemma stating that we can use a $\mathsf{TC}^0$ circuit to simulate the approximated exponential function.

\begin{lemma}[Approximating the Exponential Operation in $\mathsf{TC}^0$, Lemma 12  of~\cite{chi24}]\label{lem:exp}
    Assume the precision $p \leq \poly(n)$. Given any number $x$ with $p$-bit float point, the $\exp(x)$ function can be approximated by a uniform threshold circuit. This circuit has a size bounded by $\poly(n)$ and a constant depth $d_{\rm exp}$, and it guarantees a relative error of at most $2^{-p}$. 
\end{lemma}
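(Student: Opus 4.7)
The plan is to reduce $\exp(x)$ to a truncated power series whose evaluation decomposes into a constant number of operations, each covered by Lemma~\ref{lem:float_operations_TC}. First I would convert to base $2$ via the identity $\exp(x) = 2^{y}$ with $y := x \cdot \log_2 e$, hardcoding $\log_2 e$ as a $p$-bit constant; this costs a single floating-point multiplication (Part 1). Then I would decompose $y = k + f$ with $k \in \mathbb{Z}$ and $f \in [0,1)$ using the floor operation (available in $\mathsf{TC}^0$ by the cited results of \cite{chi24}). Multiplication by $2^k$ at the end is essentially free, since it can be absorbed into the floating-point exponent field without invoking any further gate layer.

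The core step is approximating $2^f = \exp(f \ln 2)$ on the bounded interval $f \in [0,1)$ by the partial Taylor sum $S_T := \sum_{i=0}^{T-1} (f \ln 2)^i / i!$. A Lagrange remainder bound gives $|2^f - S_T| \le (\ln 2)^T / T! \le 2^{-p}$ for $T = O(p/\log p) = \poly(n)$. In parallel for each $i \in \{0,\ldots,T-1\}$, the term $(f \ln 2)^i / i!$ is the iterated product of $i$ copies of $f \ln 2$ (Part 2, constant depth) divided by the precomputed $p$-bit constant $i!$ (Part 1). Summing these $T$ terms is a single iterated addition (Part 3, constant depth), and one final exponent-adjustment multiplies by $2^k$. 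The overall circuit therefore has constant depth and $\poly(n)$ size.

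The main obstacle is the composite error analysis. Each invocation of Parts 1--3 of Lemma~\ref{lem:float_operations_TC} introduces a rounding error of order $2^{-p}$, and these errors must compose with the Taylor truncation error to yield the stated \emph{relative} error bound $2^{-p}$ rather than merely an absolute bound. The standard fix is to run the whole computation at an inflated internal precision $p' = p + O(\log T) = p + O(\log p)$ and perform a single final $\round_p$ at the end; since $p \le \poly(n)$, the inflated precision is still $\poly(n)$ and the bounds of Lemma~\ref{lem:float_operations_TC} continue to apply. A secondary technical point is $\mathsf{DLOGTIME}$-uniformity of the circuit that supplies the constants $\log_2 e$, $\ln 2$, and $\{1/i!\}_{i<T}$: one must exhibit a logarithmic-time random-access Turing machine producing the $j$-th bit of each, which for $1/i!$ reduces to iterated multiplication in $\mathsf{TC}^0$ followed by Newton-style reciprocation, both standard.
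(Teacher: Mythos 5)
The paper gives no proof of this lemma; it is imported verbatim from \cite{chi24} (Lemma 12 there), so there is no in-paper argument to compare your proposal against. Your construction---base-$2$ range reduction $\exp(x)=2^{k+f}$ with $k\in\mathbb{Z}$, $f\in[0,1)$, followed by a truncated Taylor expansion of $2^f$ evaluated with the iterated-product, division, and iterated-sum primitives of Lemma~\ref{lem:float_operations_TC}---is the standard route and plausibly mirrors the source construction.

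There is, however, a genuine gap in the precision bookkeeping. You set the inflated internal precision to $p' = p + O(\log T) = p + O(\log p)$, justified by absorbing the $O(T)$ roundings in the Taylor sum; but this ignores the sensitivity of $2^y$ to the single rounding of $y := x \log_2 e$. A relative error $2^{-p'}$ in $y$ is an absolute error of order $2^{-p'}|y|$, which maps to a relative error of order $2^{-p'}|y|\ln 2$ in $2^y$. Since $y$ is essentially the output exponent, the representable non-overflow regime already permits $|y|$ on the order of $2^p$, so the induced relative error is on the order of $2^{p-p'}$, and meeting the stated bound $2^{-p}$ forces $p' \gtrsim 2p$ rather than $p + O(\log p)$. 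The conclusion still holds---$2p$ is still $\poly(n)$---but as written the error analysis does not close, and the distinction matters: the dominant precision cost comes from argument reduction, not from the Taylor-term count. Two smaller points: you should state explicitly that the Lagrange remainder is an absolute bound on $[0,1)$ where $2^f \ge 1$, which is exactly why it also serves as a relative bound there (otherwise the passage from absolute to relative error for $S_T$ is unjustified); and the uniformity requirement is that the circuit emitting the hard-coded constants be $\mathsf{DLOGTIME}$-describable, which is weaker than the claim you wrote (that a $\mathsf{DLOGTIME}$ machine directly outputs the $j$-th bit of $1/i!$)---the latter would itself need an argument, the former is routine.
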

Finally, we present a lemma stating that we can use a $\mathsf{TC}^0$ circuit to simulate the approximated square root operation.
\begin{lemma}[Approximating the Square Root Operation in $\mathsf{TC}^0$, Lemma 12  of~\cite{chi24}]\label{lem:sqrt}
     Assume the precision $p \leq \poly(n)$. Given any number $x$ with $p$-bit float point, the $\sqrt{x}$ function can be approximated by a uniform threshold circuit. This circuit has a size bounded by $\poly(n)$ and a constant depth $d_\mathrm{sqrt}$, and it guarantees a relative error of at most $2^{-p}$. 
\end{lemma}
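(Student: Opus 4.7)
The plan is to reduce the computation of $\sqrt{x}$ to the evaluation of a bounded-degree polynomial on a bounded quantity, and then invoke the iterated multiplication and iterated addition primitives from Lemma~\ref{lem:float_operations_TC} to evaluate that polynomial in constant depth.

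First, I would normalize $x$. Writing $x = m \cdot 2^e$ in its floating-point representation, I extract the exponent $e$, compute $e' := \lfloor e/2 \rfloor$ and $r := e - 2e' \in \{0,1\}$, so that $\sqrt{x} = 2^{e'} \sqrt{m \cdot 2^r}$. The floor and the parity selection reduce to comparisons and subtractions on integers with $\poly(n)$ bits, which are standard constant-depth threshold operations. This brings the problem to approximating $\sqrt{z}$ for $z := m \cdot 2^r$ lying in a bounded range whose size depends only on $p$.

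Next, I would produce a coarse initial approximation $a \approx \sqrt{z}$ using a table lookup on the top $O(\log p)$ bits of $z$. Because there are only $\poly(p)$ possible table indices, this table is encoded by a $\mathsf{DLOGTIME}$-uniform subcircuit of $\poly(n)$ size and constant depth. Writing $z = a^2(1 + y)$, the table is designed so that $|y| \le 1/\poly(p)$. I would then expand $(1+y)^{1/2}$ via its binomial series $\sum_{k=0}^{K} \binom{1/2}{k} y^k$, truncated at $K = O(p / \log p)$; the decay $|y|^k \le \poly(p)^{-k}$ guarantees that the tail contributes relative error at most $2^{-p}$. The binomial coefficients $\binom{1/2}{k}$ are fixed $p$-bit rationals that can be hardcoded into the circuit. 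Each power $y^k$ is computed as an iterated multiplication, which is constant depth and $\poly(n)$ size by Part~2 of Lemma~\ref{lem:float_operations_TC}; the sum of the weighted powers is an iterated addition, constant depth by Part~3 of the same lemma. Finally, I multiply by $a$ (from the same table) and by $2^{e'}$, both constant-depth operations by Part~1.

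The main obstacle is the error analysis, not the circuit construction. The truncation error from cutting off the binomial series, the rounding errors introduced by each floating-point operation in Lemma~\ref{lem:float_operations_TC}, and the propagation of these errors through the single iterated addition and the final multiplications must all be combined and shown to remain within a relative error of $2^{-p}$. Since each elementary $p$-bit operation incurs at most relative error $2^{-\Theta(p)}$ and the number of such operations is bounded by $\poly(p)$, a straightforward (but careful) accumulation argument — together with choosing the table precision and the truncation index $K$ slightly larger than $p$ to absorb a constant factor of slack — closes the gap. Once this error bookkeeping is complete, the overall circuit has depth $O(d_\otimes + d_\oplus + d_{\rm std})$ and size $\poly(n)$, yielding the claimed bound with $d_{\mathrm{sqrt}}$ equal to this constant.
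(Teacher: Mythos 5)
The paper does not actually prove this lemma: it imports it verbatim as ``Lemma 12 of [chi24]'' and uses it as a black box, just as it does for Lemma~\ref{lem:exp}. There is therefore no internal proof in this paper to compare your argument against; the evaluation has to be on the merits of your construction alone.

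Your blueprint (normalize the exponent, table-reduce to $z = a^2(1+y)$ with $|y|$ small, evaluate a truncated binomial series with the iterated-product and iterated-sum primitives of Lemma~\ref{lem:float_operations_TC}, absorb accumulated roundoff by working at precision $p + O(\log p)$) is a standard and essentially sound way to obtain a constant-depth threshold circuit for $\sqrt{x}$. The exponent splitting, the choice $K = O(p/\log p)$ so that $|y|^{K+1} \le 2^{-p}$, and the parallel evaluation of the powers $y^k$ by Part~2 of Lemma~\ref{lem:float_operations_TC} are all correct. The one place where you are cavalier is the $\mathsf{DLOGTIME}$-uniformity of the lookup table. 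A multiplexer over $\poly(p)$ precomputed $p'$-bit constants is indeed a constant-depth, $\poly(n)$-size circuit, but for $\mathsf{DLOGTIME}$-uniformity the direct-connection language must decide, in $O(\log n)$ time given a gate address, whether a given hardcoded bit of $\sqrt{z_0(\ell)}$ is $0$ or $1$. Since $z_0(\ell)$ has $\poly(n)$ bits and the machine has only $O(\log n)$ worktape, this is not automatic and should be argued, e.g., by appealing to the Hesse--Allender--Barrington results on division/iterated multiplication in $\mathsf{DLOGTIME}$-uniform $\mathsf{TC}^0$, or by sidestepping the table entirely and computing $\sqrt{x} = \exp\bigl(\tfrac{1}{2}\ln x\bigr)$ so the square root inherits uniformity from the already-established $\exp$ and $\ln$ primitives. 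With that point addressed, the error bookkeeping you sketch does close, and the claimed depth $d_{\mathrm{sqrt}} = O(d_{\mathrm{std}} + d_\oplus + d_\otimes)$ and size $\poly(n)$ follow.
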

\section{Model Formulation}\label{sec:model_formulation}
Section~\ref{def:mf_def} provides the definitions related to the VAR model. In Section~\ref{sec:phase_1}, we present the mathematical formulation of the components involved in the token map generation phase of the VAR model. In Section~\ref{sec:phase_2}, we present the mathematical formulation of the components involved in the feature map reconstruction phase of the VAR model. In Section~\ref{sec:phase_3}, we present the mathematical formulation of the components involved in the VQ-VAE Decoder phase of the VAR model.

\subsection{Definitions}\label{def:mf_def}
We give the following notations in our setting.

\begin{definition}[Codebook of VQ-VAE]\label{def:cb_vae}
In VQ-VAE, the Codebook is typically represented as a matrix $ \mathsf{C} \in \mathbb{R}^{c_{\rm VAE} \times d_{\rm VAE}}$, where:
\begin{itemize}
    \item $c_{\rm vae} $ is the number of vectors in the Codebook (i.e., the size of the Codebook),
    \item $ d_{\rm vae} $ is the dimensionality of each vector.
\end{itemize}
\end{definition}

\subsection{Phase 1: VAR Transformer}\label{sec:phase_1}
VAR uses the VAR Transformer to convert the initial tokens of the generated image into several pyramid-shaped token maps. And in the token maps generation phase, the token maps for the next scale, $M_{k+1}$, are generated based on the previous $k$ token maps ${M_1, \dots, M_{k}}$. This phase has the main modules as the following:

\paragraph{Up Sample Blocks.} 
VAR performs an upsampling operation on the $(i)$-th token map, adjusting its size to that of the $(i+1)$-th token map, before feeding the $k$ token maps into the VAR Transformer. Specifically, VAR employs an upsampling method using interpolation for the image.  Here, we define the up-interpolation blocks:

\begin{definition}[Bicubic Spline Kernel]\label{def:bi_spline_kernel}
    A bicubic spline kernel is a piecewise cubic function $W: \mathbb{F}_p \to \mathbb{F}_p$ that satisfies $W(x) \in [0,1]$ for all $x \in \mathbb{F}_p$.
\end{definition}

\begin{definition}[Up-interpolation Layer]\label{def:up_inter_layer}
The Up-Interpolation layer is defined as follows:
\begin{itemize}
    \item Let $h$ and $h'$ represent the heights of the input and output feature maps, respectively, where $h, h' \in \mathbb{N}$.
    \item Let $w$ and $w'$ denote the widths of the input and output feature maps, respectively, where $w, w' \in \mathbb{N}$.
    \item Let $c \in \mathbb{N}$ denote the number of channels.
    \item Let $X \in \R^{h \times w\times c}$ denote the input feature map.
    \item Let $Y \in \R^{h' \times w' \times c}$ denote the output feature map.
    \item Let $s,t \in \{-1,0,1,2\}$.
    \item Let $W: \mathsf{F}_p^{h \times w \times c} \to \mathsf{F}_p^{h \times w \times c}$ be a bicubic spline kernel as defined in~\ref{def:bi_spline_kernel}.
    
    We use $\phi_{\rm up}: \mathsf{F}_p^{h\times w \times c} \to \mathsf{F}_p^{h' \times w' \times c}$ to denote the up-interpolation operation then we have $Y = \phi_{\rm up}(X)$. Specifically, for $i \in [h'], j \in [w'], l \in [c]$, we have
    \begin{align*}
        Y_{i,j,l} := \sum_{s=-1}^2 \sum_{t=-1}^2 W(s) \cdot X_{\frac{i h}{h'}+s,\frac{j w}{w'}+t,l} \cdot  W(t)
    \end{align*}
    
\end{itemize}
\end{definition}

\paragraph{Transformer Blocks.} 
After the up-sample process, the generated token maps above will be input into the Transformer to predict the next token map. Here, we define several blocks for the VAR Transformer.

Then, we can move forward to define the attention matrix.
\begin{definition}[Attention Matrix]\label{def:attn_matrix}
    We use $W_Q, W_K \in \mathbb{F}_p^{d \times d}$ to denote the weight matrix of the query and key. Let $X \in \mathbb{F}_{p}^{n \times d}$ represent the input of the attention layer. Then, we use $A \in \mathbb{F}_{p}^{n \times n}$ to denote the attention matrix. Specifically, we denote the element of the attention matrix as the following:
    \begin{align*}
    A_{i,j} := & ~\exp(  X_{i,*}   W_Q   W_K^\top   X_{j,*}^\top ).
\end{align*}
\end{definition}

In the next step, we proceed to define the single attention layer.

\begin{definition}[Single attention layer]\label{def:single_layer_transformer}
     We use $W_V \in \mathbb{F}_p^{d \times d}$ to denote the weight matrix of value. Let $X \in \mathbb{F}_{p}^{n \times d}$ represent the input of the attention layer. Let $A$ denote the attention matrix defined in Definition~\ref{def:attn_matrix}. Let $D := \diag( A {\bf 1}_n) $ denote a size $n \times n$ matrix. Then, we use $\mathsf{Attn}$ to denote the attention layer. Specifically, we have
\begin{align*}
    \mathsf{Attn} (X) := & ~ D^{-1} A X W_V .
\end{align*}
\end{definition}

Then, we move forward to define the multilayer perceptron layer. 

\begin{definition}[Multilayer Perceptron layer]\label{def:mlp}
    Given an input matrix $X \in \mathsf{F}_p^{n \times d}$. Let $i \in [n]$. We use $g^{\rm MLP}$ to denote the MLP layer. Specifically, we have 
    \begin{align*}
        g^{\mathrm{MLP}}(X)_{i,*} := W \cdot X_{i,*} + b.
    \end{align*}
\end{definition}

We then proceed to define the layer-wise normalization layer.
\begin{definition}[Layer-wise normalization layer]\label{def:layer_norm}
     Given an input matrix $X \in \mathsf{F}_p^{n \times d}$. Let $i\in [n]$. We use $g^{\rm LN}$ to denote the LN layer. Specifically, we have 
    \begin{align*}
        g^{\mathrm{LN}} (X)_{i,*}  :=  \frac{X_{i,*} - \mu_i}{\sqrt{\sigma_i^2}},
    \end{align*}
    where $\mu_i := \sum_{j=1}^d X_{i,j} / d$, and $\sigma_i^2 := \sum_{j = 1}^d (X_{i,j} - \mu_i)^2 / d$.
\end{definition}

Recall we have defined $\phi_{\mathrm{up}}: \R^{h \times w \times c} \to \R^{h' \times w' \times c}$ in Definition~\ref{def:up_inter_layer}. Since there is no non-linear operation in $\phi_{\mathrm{up}}$, $\phi_{\mathrm{up}}$ is equivalent to a matrix multiplication operation, where the dimension of the matrix is $\R^{h'w' \times hw}$. For simplicity, we view $\phi_{\mathrm{up}}$ as a $\R^{h'w' \times hw}$ dimension matrix in the following proofs.

\begin{remark} [Applying $\phi_{\mathrm{up}}$ on $X \in \R^{n \times d}$]
The actual input of VAR Transformer Layer are $r$ input token maps, $X_1 \in \R^{h_1 \times w_1 \times d}, \ldots, X_r \in \R^{h_r \times w_r \times d}$. We denote them as $X \in \R^{n \times d}$, where $n := \sum_{i = 1}^r h_i w_i$. We denote $\phi_{\mathrm{up}}(X) \in \R^{n' \times d}$ as applying $\phi_{\mathrm{up}}$ to each $X_i \in \R^{h_i \times w_i \times d}$ for $i \in [r]$, where $n' = \sum_{i=1}^r h_i' w_i'$. 
\end{remark}

Then, we can combine multiple attention layers with other components (up-interpolation layers, multilayer perceptron layers, layer-wise normalization layers) to create a complete VAR Transformer architecture.

\begin{definition}[VAR transformer]\label{def:var_transformer}
Assume the VAR transformer has $m$ Transformer layers. At the $i$-th transformer layer, let $g_i$ denote components excluding the attention layer, such as the LN layer or MLP layer. Let $\mathsf{Attn}_i$ stand for the self-attention layer, which is defined in Definition~\ref{def:single_layer_transformer}.  Given  an input token map $X \in \mathbb{F}_p^{1 \times d}$. We define a VAR transformer as the following
\begin{align*}
    \mathsf{TF}(X) := g_m \circ \mathsf{Attn}_m \circ \phi_{\rm up} \dots \circ \phi_{\rm up} \circ g_1 \circ \mathsf{Attn}_1 \circ \phi_{\rm up} (X) ~~ \in \mathbb{F}_{p}^{n \times d},
\end{align*}
In this expression, $\circ$ stands for functional composition.
\end{definition}

\subsection{Phase 2: Feature Map Reconstruction }\label{sec:phase_2}
In phase 2, VAR will transform the generated token maps into feature maps. This phase has the following main modules:

\paragraph{Up Sample Blocks.} 
The VAR performs upsampling on token maps of different sizes, scaling them to the size of the final output feature map. In this process, VAR will use the up-interpolation blocks defined in Definition~\ref{def:up_inter_layer}. To mitigate information loss during token map up-scaling, VAR employs convolution blocks to post-process the up-scaled token maps. We define the convolution blocks as the following:

\begin{definition}[Convolution Block]\label{def:conv_block}
    Let \( X \in \mathbb{F}_p^{h \times w \times c} \) represent a feature map, where \( h \) and \( w \) denote the height and width of the feature map, and \( c \) is the number of input channels. Consider a convolution kernel \( K \in \mathbb{F}^{h_k \times w_k \times c} \), where \( h_k \) and \( w_k \) are the height and width of the kernel, respectively. Assume the input feature map has zero padding, and the stride of the kernel is \( 1 \). The convolution operation performed by this kernel on the input feature map is defined for \( i \in [1, h - h_k + 1] \) and \( j \in [1, w - w_k + 1] \) as the following:
    \begin{align*}
        Y_{i,j} := \sum_{m=1}^{h_k}\sum_{n=1}^{w_k} \sum_{q=1}^{c} X_{i+m-1,j+n-1,q} \cdot K_{m,n,q} + b
    \end{align*}
    where $Y$ is the output feature map of the convolution layer, and $b$ is the bias term of the kernel.
\end{definition}

\subsection{Phase 3: VQ-VAE Decoder process}\label{sec:phase_3}
VAR will use the VQ-VAE Decoder Module to reconstruct the feature map generated in Section~\ref{sec:phase_2} into a new image. The Decoder of VQ-VAE has the following main modules:

\paragraph{ResNet Blocks.} 
In the VQVAE decoder, the ResNet block, which includes two (or more) convolution blocks, plays a crucial role in improving the model's ability to reconstruct high-quality outputs. The convolution blocks help capture spatial hierarchies and patterns in the data, while the residual connections facilitate better gradient flow and allow the model to focus on learning the residuals (differences) between the input and output. The definition of convolution block is given in Definition~\ref{def:conv_block}.

\paragraph{Attention Blocks.} 
The Attention block helps the Decoder fuse information from different locations during the generation process, which can significantly improve the clarity and detail of the generated images. When applied to a feature map, the attention mechanism computes attention scores for all pairs of pixels, capturing their pairwise relationships and dependencies. The definitions of blocks in attention are given in Section~\ref{sec:phase_1}.

\paragraph{Up Sample Blocks.} 
The VQ-VAE decoder uses Up-Sample Blocks to progressively increase the spatial resolution of the latent representation. The Up-Sample Blocks in VQVAE combine up-interpolation and convolution blocks to restore the spatial dimensions of the feature maps, facilitating the reconstruction of the high-resolution output image. The convolution block has already been defined in Definition~\ref{def:conv_block}, and the up-interpolation block has already been defined in Definition~\ref{def:up_inter_layer}.

\section{Complexity of VAR Models}\label{sec:complexity_result}
We present the critical findings on the circuit complexity of crucial operations in the computation of VAR models. In Section~\ref{sec:ui_block_compute}, we analyze the up-interpolation blocks. In Section~\ref{sec:attention_matrix_compute}, we examine the matrix operations. In Section~\ref{sec:single_attention_layer_compute}, we proceed to study the single attention layer. In Section~\ref{sec:common_components_compute}, we move forward to compute the MLP layer and LN layer. In Section~\ref{sec:conv_compute}, we study the convolution layer computation. In Section~\ref{sec:phase_1_tc}, 
We show that we can use a uniform $\mathsf{TC}^0$ circuit to model the VAR Transformer. In Section~\ref{sec:phase_2_tc}, we show that we can use a uniform $\mathsf{TC}^0$ circuit to model the feature map reconstruction layer. In Section~\ref{sec:phase_3_tc}, we show that we can use a uniform $\mathsf{TC}^0$ circuit to model the VQ-VAE Decoder.   Finally, we show our main result in Section~\ref{sec:main_result}.
\subsection{Computing Up Interpolation Blocks}\label{sec:ui_block_compute}
In this section, we can show that the up-interpolation layers can be computed in $\mathsf{TC}^0$.
\begin{lemma}[Up-Interpolation in $\mathsf{TC}^0$]\label{lem:u_i_tc0}
    Let $X \in \mathbb{F}_p^{h \times w \times c}$ denote the origin feature map. Let $Y \in \mathbb{F}_p^{h' \times w' \times c}$ denote the target up-scaled feature map. Assume the precision $p \leq \mathrm{poly}(n)$, $h,h',w,w'c \leq \mathrm{poly}(n)$, then we can simulate the up-interpolation layer in Definition~\ref{def:up_inter_layer} by a size bounded by $\poly(n)$ and $O(1)$ depth uniform threshold circuit.
\end{lemma}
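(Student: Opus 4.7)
}
The plan is to show that every output entry $Y_{i,j,l}$ can be produced by a constant-depth subcircuit, and then to run all such subcircuits in parallel. Since $h',w',c\leq\poly(n)$, the total number of output entries is $\poly(n)$, so parallelization keeps the size polynomial and the depth $O(1)$.

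Fix an output coordinate $(i,j,l)$ with $i\in[h']$, $j\in[w']$, $l\in[c]$. By Definition~\ref{def:up_inter_layer}, $Y_{i,j,l}$ is a sum of exactly $16$ terms, each of the form $W(s)\cdot X_{\frac{ih}{h'}+s,\,\frac{jw}{w'}+t,\,l}\cdot W(t)$ for $s,t\in\{-1,0,1,2\}$. First I would compute the sixteen index pairs $(\lfloor ih/h'\rfloor+s,\lfloor jw/w'\rfloor+t)$: these require one floating-point division, one floor, and one addition each, all handled by Part~1 of Lemma~\ref{lem:float_operations_TC} (together with the floor simulation from~\cite{chi24}) in constant depth $O(d_{\rm std})$. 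Conditional selection of the corresponding entry $X_{\cdot,\cdot,l}$ from the input array is a routing step implementable by a constant-depth threshold circuit of polynomial size, since each index is an integer of $O(\log n)$ bits.

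Next, evaluating the bicubic spline kernel $W$ at an argument $s\in\{-1,0,1,2\}$ amounts to evaluating a fixed piecewise cubic polynomial at one of four hard-wired values; each evaluation uses $O(1)$ floating-point multiplications and additions, and by Part~1 of Lemma~\ref{lem:float_operations_TC} this fits in constant depth. The triple product $W(s)\cdot X_{\cdots}\cdot W(t)$ is then two floating-point multiplications, again constant depth by Part~1. Finally, the $16$-term sum collapses to an iterated addition of a constant number of $p$-bit floats, which by Part~3 of Lemma~\ref{lem:float_operations_TC} is computable in depth $d_{\oplus}$.

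Composing these stages, the subcircuit producing one $Y_{i,j,l}$ has depth bounded by a fixed constant (roughly $O(d_{\rm std}+d_{\oplus})$) and polynomial size in $n$. Running one such subcircuit for every $(i,j,l)\in[h']\times[w']\times[c]$ in parallel yields a circuit of size $\poly(n)$ and depth $O(1)$. Uniformity is inherited from the $\mathsf{DLOGTIME}$-uniformity of the primitive operations invoked from Lemma~\ref{lem:float_operations_TC}, since the wiring pattern (index arithmetic, selection, and the fixed $4\times 4$ stencil) is itself computable by a $\mathsf{DLOGTIME}$ machine. I do not anticipate any serious obstacle here; the only mildly delicate point is arguing that the integer index computations and the memory-style selection of $X_{\cdots,l}$ remain in uniform $\mathsf{TC}^0$, and this follows from the fact that each index has $O(\log n)$ bits and the selection is a standard multiplexer realizable in constant depth with polynomially many threshold gates.
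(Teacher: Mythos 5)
Your proof takes essentially the same approach as the paper: compute the sixteen terms $W(s)\cdot X_{\cdot}\cdot W(t)$ in parallel using Part~1 of Lemma~\ref{lem:float_operations_TC}, sum them via Part~3, and then parallelize over all output coordinates $(i,j,l)$, invoking $h',w',c\leq\poly(n)$ to bound the size. You are, if anything, more careful than the paper's own argument, which silently assumes the index arithmetic $\lfloor ih/h'\rfloor+s$, the selection of the corresponding $X$ entry, and the evaluation of the spline kernel $W$ are given; your explicit treatment of these as constant-depth, $\mathsf{DLOGTIME}$-uniform subcircuits fills in real but minor gaps without changing the structure of the argument.
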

\begin{proof}

    Firstly, we begin to compute every entry in the targeted feature map $Y$. For $i \in [h'], j \in [w'], l \in [c]$, we have
    \begin{align*}
        Y_{i,j,l} = \sum_{s=-1}^2 \sum_{t=-1}^2 W(s) \cdot X_{\frac{i h}{h'}+s,\frac{j w}{w'}+t,q} \cdot W(t)
    \end{align*}
    By using the result of Part 1 of Lemma~\ref{lem:float_operations_TC}, we can apply a constant depth $2d_{\rm std}$ uniform threshold circuit to compute each product $W(s) \cdot X_{\frac{i h} {h'}+s,\frac{j w}{w'}+t,q} \cdot W(t)$. Since the products for different $s$ and $t$ can be parallel computed, the uniform threshold circuit's depth for all products $W(u) \cdot X_{\frac{i h}{h'}+s,\frac{j w}{w'}+t,q}$ stays $2d_{\rm std}$.

    Then, by using the result of Part 3 in Lemma~\ref{lem:float_operations_TC}, we can use a $d_\oplus$ depth uniform threshold circuit to model the sum operation:
    \begin{align*}
        \sum_{s=-1}^2 \sum_{t=-1}^2 W(s) \cdot X_{\frac{i h}{h'}+s,\frac{j w}{w'}+t,q} \cdot W(t)
    \end{align*}
     Hence, the total depth of the circuit required to compute $Y_{i,j,l}$ is $2d_{\rm std} + d_\oplus$. As we can parallel compute $Y_{i,j,q}$, for all $i \in [h']$, $j \in [w']$ , and $l \in [c]$. So the total depth is $2d_{\rm std} + d_\oplus$. The circuit size is $\poly(n)$, which is due to $h', w', c \leq \mathrm{poly}(n)$. Therefore, the entire Up-Interpolation process can be performed by a uniform threshold circuit, where its size is bounded by $\mathrm{poly}(n)$ and its depth remains constant. This concludes the proof.
\end{proof}

\subsection{Computing Attention Matrix}\label{sec:attention_matrix_compute}
Let us begin by recalling that the matrix multiplication of two matrices belongs to $\mathsf{TC}^0$.

\begin{lemma}[Matrix Multiplication  belongs to $\mathsf{TC}^0$ class, Lemma 4.2 in \cite{cll+24}] \label{lem:matrix_multiplication_tc0}
Assume the precision $p \leq \poly(n)$, $n_1, n_2 \leq \poly(n)$, and $d \leq n$. Let $A \in \mathbb{F}_{p}^{n_1 \times d}$ and $B \in \mathbb{F}_p^{d \times n_2}$. Then we can apply a  $\mathsf{DLOGTIME}$-uniform threshold circuit with constant depth $( d_{\rm std}+d_{\oplus} )$ and size bounded by $\poly(n)$ to get the matrix product $AB$.
\end{lemma}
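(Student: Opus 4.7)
The plan is to compute each entry of the product $AB$ independently and in parallel, then argue that the depth and size bounds follow directly from the basic floating-point operation lemma (Lemma~\ref{lem:float_operations_TC}). Concretely, for $i \in [n_1]$ and $j \in [n_2]$, the target entry is
\begin{align*}
(AB)_{i,j} = \sum_{k=1}^{d} A_{i,k} \cdot B_{k,j}.
\end{align*}
So it suffices to show that each such sum of $d$ products of $p$-bit floating-point numbers can be realized by a $\mathsf{DLOGTIME}$-uniform threshold circuit of size $\poly(n)$ and depth $d_{\rm std}+d_\oplus$.

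First, I would compute all $d$ scalar products $A_{i,k}\cdot B_{k,j}$ for $k\in[d]$ in parallel. By Part 1 of Lemma~\ref{lem:float_operations_TC}, each multiplication of two $p$-bit floating-point numbers is simulated by a uniform threshold circuit of size $\poly(n)$ and depth $d_{\rm std}$, and parallelization across $k$ keeps the depth at $d_{\rm std}$ while blowing up the size by at most a factor of $d \le n$. Second, I would feed the $d$ resulting products into the iterated-addition gadget from Part 3 of Lemma~\ref{lem:float_operations_TC}, which produces their sum (with the standard rounding) in additional depth $d_\oplus$ and size $\poly(n)$. Composing these two sub-circuits sequentially yields $(AB)_{i,j}$ in depth $d_{\rm std}+d_\oplus$ and size $\poly(n)$.

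Finally, since the $n_1\cdot n_2\le\poly(n)$ output entries are pairwise independent computations sharing only the input wires, I would instantiate one such sub-circuit per $(i,j)$ pair in parallel. The overall depth remains $d_{\rm std}+d_\oplus$ (a constant, since both $d_{\rm std}$ and $d_\oplus$ are absolute constants), and the total size is bounded by $n_1 n_2\cdot \poly(n) = \poly(n)$. $\mathsf{DLOGTIME}$-uniformity of the combined construction is immediate from the $\mathsf{DLOGTIME}$-uniformity guaranteed by Lemma~\ref{lem:float_operations_TC} together with the trivial uniform wiring between the multiplication layer and the iterated-addition layer, completing the proof.

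There is no genuine obstacle here: the argument is essentially a bookkeeping exercise that combines the two floating-point primitives already established in Lemma~\ref{lem:float_operations_TC}. The only point that requires mild care is confirming that the constant $d_{\rm std}+d_\oplus$ does not grow with $n_1$, $n_2$, or $d$, which follows because parallel composition across independent coordinates does not increase depth.
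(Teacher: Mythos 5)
The paper does not prove this lemma itself; it is imported verbatim as Lemma 4.2 of \cite{cll+24}, so there is no in-paper proof to compare against. Your argument is correct and is the standard one implicit in that citation: parallel scalar multiplications (Part 1 of Lemma~\ref{lem:float_operations_TC}, depth $d_{\rm std}$), followed by iterated addition (Part 3, depth $d_\oplus$), replicated in parallel over all $n_1 n_2 \leq \poly(n)$ output entries, with uniformity and size bookkeeping handled properly.
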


\subsection{Computing Single Attention Layer}\label{sec:single_attention_layer_compute}
Subsequently, matrix operations can be applied to compute the attention matrix.

\begin{lemma}[Attention matrix computation belongs to $\mathsf{TC}^0$ class]
Assume the precision $p \leq \poly(n)$, then we can use a size bounded by $\poly(n)$ and constant depth $3(d_{\rm std} + d_{\oplus}) + d_{\rm exp}$ uniform threshold circuit to compute the attention matrix $A$ defined in Definition~\ref{def:attn_matrix}.

\end{lemma}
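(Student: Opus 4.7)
The plan is to decompose the computation of the attention matrix into three sequential matrix multiplications followed by an entry-wise exponentiation, and then bound each stage using the lemmas already established in Section~\ref{sec:basic_tools} and Section~\ref{sec:attention_matrix_compute}. Specifically, I would rewrite $A_{i,j} = \exp(X_{i,*} W_Q W_K^\top X_{j,*}^\top)$ as the entry-wise exp of the $n \times n$ matrix $X W_Q W_K^\top X^\top$, which in turn is obtained via the three successive products: compute $M_1 := W_Q W_K^\top \in \mathbb{F}_p^{d \times d}$, then $M_2 := X M_1 \in \mathbb{F}_p^{n \times d}$, then $M_3 := M_2 X^\top \in \mathbb{F}_p^{n \times n}$.

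Next, I would invoke Lemma~\ref{lem:matrix_multiplication_tc0} three times, once per product. Since all dimensions satisfy $d \leq n$ and $n \leq \poly(n)$, each of the three products is simulated by a $\mathsf{DLOGTIME}$-uniform threshold circuit of size $\poly(n)$ and depth $d_{\rm std} + d_\oplus$. Because the three products are composed sequentially (each consumes its predecessor's output), their depths add, giving a depth of $3(d_{\rm std} + d_\oplus)$ to obtain $M_3$, while the overall size remains $\poly(n)$ by summing polynomially many gates across a constant number of stages.

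Finally, I would apply Lemma~\ref{lem:exp} in parallel to each of the $n^2$ entries of $M_3$ to compute $A_{i,j} = \exp((M_3)_{i,j})$. Since all entries are handled independently, this stage adds only $d_{\rm exp}$ to the depth and $\poly(n)$ gates to the size. Combining the two stages yields the claimed total depth $3(d_{\rm std} + d_\oplus) + d_{\rm exp}$ and size $\poly(n)$.

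I do not foresee a substantive obstacle: the argument is a bookkeeping exercise that chains the matrix-multiplication lemma with the exponential-approximation lemma in the obvious order. The only point that merits care is ensuring that the intermediate products $M_1, M_2, M_3$ remain representable as $p$-bit floats, which is already handled by the rounding built into the iterated-addition component of Lemma~\ref{lem:float_operations_TC}, and that the parallelization of the final exp stage is legitimate because each entry of $A$ depends on its own entry of $M_3$ only.
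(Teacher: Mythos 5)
Your proposal is correct and follows essentially the same route as the paper's proof: compute $W_Q W_K^\top$ (depth $d_{\rm std}+d_\oplus$), then the quadratic form $X W_Q W_K^\top X^\top$ via two further applications of the matrix-multiplication lemma (adding $2(d_{\rm std}+d_\oplus)$), then apply the exponential approximation entry-wise in parallel (adding $d_{\rm exp}$). The only cosmetic difference is that you phrase the middle stage as two explicit matrix products $M_2 = X M_1$ and $M_3 = M_2 X^\top$, whereas the paper describes computing the scalars $s_{i,j}$ directly; these are the same computation.
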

\begin{proof}
    Based on Lemma~\ref{lem:matrix_multiplication_tc0}, 
    we can compute the matrix product $W_Q W_K^\top$ by using a size bounded by $\poly(n)$ and constant depth $d_{\rm std} + d_{\oplus}$ uniform threshold circuit.

    Then, we move forward to compute the scalar product, which is
    \begin{align*}
        s_{i,j} = X_{i,*}W_Q W_K^\top X_{j,*}^\top
    \end{align*}
    And by using the result of Lemma~\ref{lem:matrix_multiplication_tc0}, we can compute $t_{i,j}$ by applying a uniform threshold circuit, where the circuit has a polynomial-size bounded by  $\poly(n)$ and constant depth $2(d_{\rm std} + d_{\oplus})$.

    In the next step, from Lemma~\ref{lem:exp}, we can compute the exponential function $A_{i,j} = \exp(t_{i,j})$ by applying a size bounded by $\poly(n)$ and constant depth $d_{\rm exp}$ uniform threshold circuit.

   After combining depths from all steps, the total depth of the circuit for computing $A_{i,j}$ is 
    \begin{align*}
        d_{\rm total} = 3(d_{\rm std} + d_{\oplus}) + d_{\rm exp}.
    \end{align*}

    Since we can parallel compute all entries in $A_{i,j}$ for $i,j \in [n]$, the circuit depth remains $3(d_{\rm std} + d_{\oplus}) + d_{\rm exp}$ and size bounded by $\poly(n)$.

    Thus, we have proven the result.
\end{proof}

\subsection{Computing Common Components Layers}\label{sec:common_components_compute}
This section outlines the MLP layer circuit complexity.
\begin{lemma}[MLP computation falls within $\mathsf{TC}^0$ class, Lemma 4.5 of \cite{cll+24}]\label{lem:mlp_tc0}
    Assume the precision $p \leq \poly(n)$. Then, we can use a size bounded by $\poly(n)$ and constant depth $2d_\mathrm{std} + d_{\oplus}$ uniform threshold circuit to simulate the MLP layer in Definition~\ref{def:mlp}.
\end{lemma}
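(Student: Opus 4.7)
The plan is to decompose the MLP computation $g^{\mathrm{MLP}}(X)_{i,*} = W \cdot X_{i,*} + b$ into two stages, bound the circuit depth of each stage using the tools already developed, and then argue that all rows $i \in [n]$ can be handled in parallel.

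First I would handle the linear part $W \cdot X_{i,*}$. Since $W \in \mathbb{F}_p^{d \times d}$ and $X_{i,*} \in \mathbb{F}_p^{1 \times d}$ with $d \leq \poly(n)$ and $p \leq \poly(n)$, this is exactly an instance of the matrix-multiplication setting of Lemma~\ref{lem:matrix_multiplication_tc0} (viewing the operation either as a matrix--vector product or, after stacking all rows, as a single matrix product $X W^\top$). That lemma supplies a $\mathsf{DLOGTIME}$-uniform threshold circuit of size $\poly(n)$ and constant depth $d_{\rm std} + d_{\oplus}$ that outputs every entry of $W \cdot X_{i,*}$ simultaneously, where the $d_{\rm std}$ term comes from the parallel pairwise multiplications (Part 1 of Lemma~\ref{lem:float_operations_TC}) and the $d_{\oplus}$ term from the iterated sum over the inner index (Part 3 of Lemma~\ref{lem:float_operations_TC}).

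Next I would add the bias. Each entry of the output is obtained by adding one floating-point number from $b$ to the corresponding entry of $W \cdot X_{i,*}$. By Part~1 of Lemma~\ref{lem:float_operations_TC}, a single floating-point addition is computable by a $\poly(n)$-size, depth $d_{\rm std}$ uniform threshold circuit, and since all such additions act on disjoint inputs they can be performed in parallel without increasing depth. Composing this with the previous stage gives a total depth of $(d_{\rm std} + d_{\oplus}) + d_{\rm std} = 2 d_{\rm std} + d_{\oplus}$ and size still $\poly(n)$.

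Finally I would note that the construction above produces $g^{\mathrm{MLP}}(X)_{i,*}$ for a fixed $i$, but the circuit uses only the row $X_{i,*}$ and the shared parameters $W, b$; replicating it for every $i \in [n]$ in parallel multiplies the size by a factor of $n$, which remains $\poly(n)$, and leaves the depth unchanged. I do not expect any real obstacle here: the statement is essentially a direct composition of Lemma~\ref{lem:matrix_multiplication_tc0} with one layer of floating-point addition, and the only points worth being careful about are (i) making sure the depth accounting does not double-count a $d_{\rm std}$ already absorbed into the matrix-multiplication bound, and (ii) confirming that the $\mathsf{DLOGTIME}$-uniformity of the sub-circuits is preserved under the (very regular) parallel composition over the $n$ rows, which is immediate from the fact that the wiring pattern is a simple function of $i$.
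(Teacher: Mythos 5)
The paper states this lemma without proof, citing it as Lemma 4.5 of \cite{cll+24}, so there is no internal argument to compare against; your reconstruction is the natural one and the depth accounting is correct. Computing $W \cdot X_{i,*}$ via Lemma~\ref{lem:matrix_multiplication_tc0} costs depth $d_{\rm std} + d_{\oplus}$, the entrywise bias addition adds one more $d_{\rm std}$ by Part~1 of Lemma~\ref{lem:float_operations_TC}, and parallelizing over $i \in [n]$ only scales size by $n$, giving exactly $2d_{\rm std} + d_{\oplus}$ depth and $\poly(n)$ size as claimed.
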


Next, we examine the layer-normalization (LN) layer circuit complexity.

\begin{lemma}[LN computation falls within $\mathsf{TC}^0$ class, Lemma 4.6 of \cite{cll+24}]\label{lem:layer_tc0}
    Assume the precision $p \leq \poly(n)$, then we can use a size bounded by $\poly(n)$ and constant depth $5d_\mathrm{std} + 2d_{\oplus} + d_\mathrm{sqrt}$ uniform threshold circuit to simulate the Layer-wise Normalization layer defined in Definition~\ref{def:layer_norm}.
\end{lemma}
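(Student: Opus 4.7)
The plan is to decompose the layer normalization formula $g^{\mathrm{LN}}(X)_{i,*} = (X_{i,*} - \mu_i)/\sqrt{\sigma_i^2}$ into a sequence of primitive floating-point operations, each of which lives in $\mathsf{TC}^0$ by Lemma~\ref{lem:float_operations_TC}, Lemma~\ref{lem:exp}, or Lemma~\ref{lem:sqrt}, and then sum up the depths along the longest dependency chain. Since the outputs for different rows $i \in [n]$ and different columns $j \in [d]$ can all be computed in parallel (the computation for row $i$ depends only on $X_{i,*}$), it suffices to bound the circuit depth needed to produce a single entry $g^{\mathrm{LN}}(X)_{i,j}$; the overall size remains $\poly(n)$ because $d \le \poly(n)$.

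First I would compute $\mu_i = \sum_{j=1}^d X_{i,j}/d$: by Part 3 of Lemma~\ref{lem:float_operations_TC} the iterated sum takes depth $d_{\oplus}$, and Part 1 gives the final division by the constant $d$ in depth $d_{\mathrm{std}}$, for a subtotal of $d_{\oplus} + d_{\mathrm{std}}$. Next, in parallel over $j$, I would form $Y_{i,j} := X_{i,j} - \mu_i$ using one addition layer of depth $d_{\mathrm{std}}$, and then $Y_{i,j}^2$ using one multiplication layer of depth $d_{\mathrm{std}}$. Summing these squares and dividing by $d$ yields $\sigma_i^2$ in an additional $d_{\oplus} + d_{\mathrm{std}}$ depth via the same two parts of Lemma~\ref{lem:float_operations_TC}.

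To finish, I would apply Lemma~\ref{lem:sqrt} to approximate $\sqrt{\sigma_i^2}$ in depth $d_{\mathrm{sqrt}}$, and then divide the previously-computed $Y_{i,j}$ by this quantity in depth $d_{\mathrm{std}}$ (Part 1 of Lemma~\ref{lem:float_operations_TC}). Adding the depths along the critical path gives
\begin{align*}
(d_{\oplus} + d_{\mathrm{std}}) + d_{\mathrm{std}} + d_{\mathrm{std}} + (d_{\oplus} + d_{\mathrm{std}}) + d_{\mathrm{sqrt}} + d_{\mathrm{std}} = 5 d_{\mathrm{std}} + 2 d_{\oplus} + d_{\mathrm{sqrt}},
\end{align*}
matching the depth claimed in the statement. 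The size bound is immediate: every layer has at most $\poly(n)$ gates because each of the cited floating-point subcircuits has $\poly(n)$ size and we compose only a constant number of them in sequence across at most $nd \le \poly(n)$ parallel copies.

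There is no real obstacle here; the only thing to be careful about is counting each primitive operation exactly once and verifying that the rounding step built into the iterated sum (Part 3 of Lemma~\ref{lem:float_operations_TC}) does not force an extra layer, so that the depth contributions add up to precisely $5d_{\mathrm{std}} + 2d_{\oplus} + d_{\mathrm{sqrt}}$ rather than something larger. Uniformity is inherited from the underlying $\mathsf{DLOGTIME}$-uniform circuits for the arithmetic primitives, since the wiring of our construction depends only on the shape parameters $n$ and $d$.
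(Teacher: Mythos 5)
The paper does not actually supply a proof for this lemma; it simply imports it from the cited work (Lemma 4.6 of \cite{cll+24}) and uses it as a black box. Your proposal therefore cannot be checked against an in-paper argument, but it is a correct and essentially standard reconstruction: you decompose $g^{\mathrm{LN}}$ along the critical dependency chain
$\mu_i \to (X_{i,j}-\mu_i) \to (X_{i,j}-\mu_i)^2 \to \sigma_i^2 \to \sqrt{\sigma_i^2} \to$ final division, charging $d_\oplus + d_{\rm std}$ for the mean, $d_{\rm std}$ each for the subtraction and the squaring, $d_\oplus + d_{\rm std}$ for the variance, $d_{\rm sqrt}$ for the root, and $d_{\rm std}$ for the last division, which sums to exactly $5d_{\rm std}+2d_\oplus+d_{\rm sqrt}$. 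The parallelism argument over $i\in[n]$ and $j\in[d]$ and the $\poly(n)$ size bound are handled correctly, and uniformity is indeed inherited from the primitive subcircuits. Two implicit conventions are worth flagging, though neither is a gap: subtraction is treated as having the same depth $d_{\rm std}$ as addition (standard, since it is addition after a sign flip), and the divisor $d$ is treated as a $p$-bit float constant available as input. Your remark about the rounding inside iterated addition is the right thing to worry about, and the lemma as stated in the paper already absorbs that rounding into $d_\oplus$, so no extra layer is needed.
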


\subsection{Computing Convolution Blocks} \label{sec:conv_compute}
We prove in this section that the convolution layers can be computed within $\mathsf{TC}^0$.

\begin{lemma}[One Kernel Convolution Process in $\mathsf{TC}^0$] \label{lem:conv_tc0}
Under the premise that the following conditions apply:
\begin{itemize}
    \item Let $X \in \mathbb{F}_p^{h \times w \times c}$ denote the origin feature map.
    \item Let $K \in \mathbb{F}_p^{h_k \times w_k \times c}$ denote a convolution kernel.
    \item Assume the padding in the convolution process is $0$.
    \item Assume the stride in the convolution process is $1$.
    \item Let $Y \in \mathbb{F}_p^{(h-h_k+1) \times (w-w_k+1) \times c}$ denote the output feature map.
    \item For $i \in [h-h_k+1]$ and $j \in [w-w_k+1]$.
    \item Let $h,w,c \leq \mathrm{poly}(n)$.
    \item Let $h_k \times w_k \times c \leq n$.
\end{itemize}
Then, we can apply a size bounded by $\poly(n)$ and $O(1)$ depth uniform threshold circuit to simulate one kernel convolution process.

\begin{proof}
For each $i \in [h-h_k+1]$ and $j \in [w-w_k+1]$, we know
\begin{align*}
     Y_{i,j} := \sum_{m=1}^{h_k}\sum_{n=1}^{w_k} \sum_{q=1}^{c} X_{i+m-1,j+n-1,q} \cdot K_{m,n,q} + b
\end{align*}
By using the result of Part 1 in Lemma~\ref{lem:float_operations_TC}, we can use a size bounded by $\poly(n)$ and $O(1)$ depth uniform threshold circuit to compute each product $X_{i+m-1,j+n-1,q} \cdot K_{m,n,q}$. Furthermore, the computation of $X_{i+m-1,j+n-1,q} \cdot K_{m,n,q}$ can be performed in parallel for all $m \in [h_k]$, $n \in [w_k]$ and $q \in [c]$. Therefore, the total depth of the circuit remains $O(1)$, and its size stays $\poly(n)$, since $h_k \times w_k \times c \leq n$.

Then, we proceed to compute the sum $\sum_{m=1}^{h_k}\sum_{n=1}^{w_k} \sum_{q=1}^{c} X_{i+m-1,j+n-1,q} \cdot K_{m,n,q}+b$. Using the result from Lemma~\ref{lem:float_operations_TC}, we can use a size bounded by $\poly(n)$ and $O(1)$ depth uniform threshold circuit to compute the sum. By computing $Y_{i,j}$ for all $i \in [h-h_k+1], j \in [w-w_k+1]$ in parallel, we maintain the uniform threshold circuit with $O(1)$ depth and size bounded by $\poly(n)$ which is due to $h,w \leq \poly(n)$.

Thus, we can apply a size bounded by $\poly(n)$ and $O(1)$ depth uniform threshold circuit to simulate the one kernel convolution process.
\end{proof}
\end{lemma}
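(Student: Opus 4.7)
The plan is to reduce the one-kernel convolution to two primitive operations already established to lie in uniform $\mathsf{TC}^0$: iterated multiplication at the entry level, followed by iterated addition over the receptive field. First I would fix an arbitrary output coordinate $(i,j)$ with $i \in [h-h_k+1]$ and $j \in [w-w_k+1]$ and expand the defining sum
\begin{align*}
Y_{i,j} = \sum_{m=1}^{h_k} \sum_{n=1}^{w_k} \sum_{q=1}^{c} X_{i+m-1,\,j+n-1,\,q} \cdot K_{m,n,q} + b,
\end{align*}
treating the triple sum as an iterated addition over at most $h_k w_k c \leq n$ floating-point summands. Since the indices $(i,j,m,n,q)$ are all polynomially bounded, routing the correct inputs to the correct gates is $\mathsf{DLOGTIME}$-uniform.

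Next I would handle the multiplications. For every triple $(m,n,q)$ the product $X_{i+m-1,j+n-1,q}\cdot K_{m,n,q}$ is a single floating-point multiplication, which by Part 1 of Lemma~\ref{lem:float_operations_TC} is realizable by a $\poly(n)$-size, depth-$d_{\rm std}$ threshold subcircuit. All $h_k w_k c \leq n$ such products are computed in parallel, so the depth remains $d_{\rm std}$ and the total size stays $\poly(n)$. The bias $b$ is a constant input that can be appended to the list of summands at no extra depth.

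Then I would apply Part 3 of Lemma~\ref{lem:float_operations_TC} to compute the iterated addition of these $h_k w_k c + 1 \leq n+1$ floating-point numbers in $\poly(n)$ size and depth $d_\oplus$. Composing the two stages gives a subcircuit of depth $d_{\rm std} + d_\oplus$ and size $\poly(n)$ that produces $Y_{i,j}$. Finally I would parallelize across all output coordinates: there are at most $(h-h_k+1)(w-w_k+1) \leq hw \leq \poly(n)$ of them, so the overall circuit has depth $d_{\rm std} + d_\oplus = O(1)$ and size $\poly(n)$, and uniformity is preserved because each subcircuit has the same structure indexed by polynomially bounded coordinates.

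I do not expect any genuine obstacle here; the only place requiring care is bookkeeping the size bound. The hypothesis $h_k w_k c \leq n$ is precisely what ensures that each receptive-field summation falls within the scope of the iterated addition lemma (which is stated for $n$ summands), while $h,w \leq \poly(n)$ ensures the parallel replication across output positions keeps the total circuit size polynomial. No nonlinearity beyond multiplication and addition appears, so neither $\exp$ nor $\sqrt{\cdot}$ machinery is required.
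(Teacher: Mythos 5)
Your proof follows essentially the same route as the paper's: compute all entrywise products $X_{i+m-1,j+n-1,q}\cdot K_{m,n,q}$ in parallel via Part~1 of Lemma~\ref{lem:float_operations_TC}, then collapse the triple sum (with $b$ appended) via the iterated addition of Part~3, and finally replicate across all $(i,j)$ output coordinates using the polynomial bounds on $h,w$. Your version is in fact a touch more explicit than the paper's — you name the final depth as $d_{\rm std}+d_\oplus$, handle the bias $b$ as an explicit extra summand, and flag the $\mathsf{DLOGTIME}$-uniformity of the index routing — but these are refinements of the same argument, not a different one.
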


\begin{proposition}[Multiple Kernel Convolution Process in $\mathsf{TC}^0$]\label{pro:mul_kernel_conv_proc} Assume we have $k$ convolution kernel in a convolution block. Let $k \leq \mathrm{poly}(n)$. Since the computations of different convolutional kernels can be parallelizable, then we can apply a size $\poly(n)$ and $O(1)$ depth to simulate the whole process.
\end{proposition}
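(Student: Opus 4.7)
The plan is to reduce the multi-kernel case directly to the single-kernel case handled in Lemma~\ref{lem:conv_tc0}. First I would observe that a convolution block with $k$ kernels produces $k$ output feature maps $Y^{(1)}, \ldots, Y^{(k)}$, where each $Y^{(i)}$ is obtained by applying the $i$-th kernel $K^{(i)} \in \mathbb{F}_p^{h_k \times w_k \times c}$ to the same input feature map $X \in \mathbb{F}_p^{h \times w \times c}$ in the sense of Definition~\ref{def:conv_block}. Crucially, the computation of $Y^{(i)}$ depends only on $X$ and on $K^{(i)}$, and never on any $Y^{(j)}$ for $j \neq i$, so the $k$ single-kernel convolutions are mutually independent.

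Next I would invoke Lemma~\ref{lem:conv_tc0} once for each kernel: under the hypotheses $h, w, c \leq \poly(n)$ and $h_k \cdot w_k \cdot c \leq n$, each individual kernel convolution producing $Y^{(i)}$ from $X$ and $K^{(i)}$ can be simulated by a $\mathsf{DLOGTIME}$-uniform threshold circuit of $\poly(n)$ size and $O(1)$ depth. I would then wire these $k$ circuits in parallel, all reading from the same copies of $X$ (which causes no issue since we have unbounded fan-out). Because they are placed side by side rather than composed, the overall depth is the same constant as in Lemma~\ref{lem:conv_tc0}, and the overall size is at most $k$ times the single-kernel size, which remains $\poly(n)$ since $k \leq \poly(n)$ by assumption.

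Uniformity of the combined circuit follows immediately from the uniformity of each single-kernel subcircuit given by Lemma~\ref{lem:conv_tc0}: a $\mathsf{DLOGTIME}$ random-access Turing machine, given an index into the overall circuit, can decode which of the $k$ subcircuits it lies in and which gate within that subcircuit it corresponds to, then defer to the uniform construction of Lemma~\ref{lem:conv_tc0}. Honestly, there is no real obstacle here: the statement is a direct parallel-composition corollary of the single-kernel lemma, and the only thing to check carefully is that placing polynomially many constant-depth $\poly(n)$-size circuits side by side preserves both bounds, which is standard.
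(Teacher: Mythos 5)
Your proof is correct and takes essentially the same approach as the paper, which simply notes that the result follows from Lemma~\ref{lem:conv_tc0} together with $k \leq \poly(n)$ by running the $k$ single-kernel circuits in parallel. You have merely spelled out the size, depth, and uniformity bookkeeping that the paper leaves implicit.
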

\begin{proof}
    This is can be easily derived from  Lemma~\ref{lem:conv_tc0} and $k \leq \mathrm{poly}(n)$.
\end{proof}

\subsection{Computing Phase 1: VAR Transformer}\label{sec:phase_1_tc}
In this part, we establish that the VAR Transformer defined in Definition~\ref{def:var_transformer} is within the computational power of $\mathsf{TC}^0$

\begin{lemma}[VAR Transformer computation in $\mathsf{TC}^0$]\label{lem:var_transformer_tc0}
Assume the number of transformer layers $m = O(1)$. Assume the precision $p \leq \poly(n)$. Then, we can apply a uniform threshold circuit to simulate the VAR Transformer $\mathsf{TF}$ defined in Definition~\ref{def:var_transformer}. The circuit has size $\poly(n)$ and $O(1)$ depth.
\end{lemma}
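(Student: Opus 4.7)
The plan is to exploit the fact that every atomic component appearing in $\mathsf{TF}$, namely the up-interpolation $\phi_{\rm up}$, the self-attention $\mathsf{Attn}_i$, and the ``other'' block $g_i$ (a composition of MLP and LN pieces), has already been shown to admit a $\mathsf{DLOGTIME}$-uniform $\mathsf{TC}^0$ simulation of polynomial size and constant depth. Since $\mathsf{TF}$ is an $O(1)$-deep functional composition of such blocks, stacking the corresponding circuits wire-for-wire yields another uniform $\mathsf{TC}^0$ family.

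The one piece not packaged as a named lemma earlier is the full single attention layer $\mathsf{Attn}(X) = D^{-1} A X W_V$ of Definition~\ref{def:single_layer_transformer}, so I would handle it first. The attention matrix $A$ is computable by a uniform threshold circuit of depth $3(d_{\rm std}+d_{\oplus})+d_{\rm exp}$ and size $\poly(n)$ by the attention matrix lemma. The row sums that form $D = \diag(A \mathbf{1}_n)$ are $n$ iterated additions, hence constant depth by Part 3 of Lemma~\ref{lem:float_operations_TC}. The diagonal entries of $D^{-1}$ are $n$ parallel floating-point divisions, each constant depth by Part 1 of Lemma~\ref{lem:float_operations_TC}. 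Finally, the three matrix products $AX$, $(AX)W_V$, and $D^{-1}(AXW_V)$ are each constant depth by Lemma~\ref{lem:matrix_multiplication_tc0}. Composing these stages yields an $O(1)$-depth, $\poly(n)$-size uniform threshold circuit for $\mathsf{Attn}_i$.

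Next I would assemble the VAR transformer. By Lemma~\ref{lem:u_i_tc0}, every $\phi_{\rm up}$ is simulable in $\mathsf{TC}^0$. By Lemmas~\ref{lem:mlp_tc0} and~\ref{lem:layer_tc0}, each MLP and LN layer is simulable in $\mathsf{TC}^0$, so $g_i$—being a composition of constantly many such pieces—is also a constant-depth, polynomial-size uniform threshold circuit. The VAR transformer
\[
\mathsf{TF}(X) = g_m \circ \mathsf{Attn}_m \circ \phi_{\rm up} \circ \cdots \circ g_1 \circ \mathsf{Attn}_1 \circ \phi_{\rm up}(X)
\]
is therefore a sequential composition of $3m$ blocks, each admitting a uniform $\mathsf{TC}^0$ simulation. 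Stacking the circuits produces a combined circuit whose depth is the sum of the per-block depths and whose size is the sum of the per-block sizes. Since $m = O(1)$ and each per-block depth is $O(1)$, the total depth is $O(1)$; since each per-block size is $\poly(n)$ and $m = O(1)$, the total size remains $\poly(n)$. Uniformity transfers because each block's circuit is $\mathsf{DLOGTIME}$-uniform and the gluing pattern (which layer's output feeds which layer's input) is itself describable by a $\mathsf{DLOGTIME}$ random access Turing machine.

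The only mild bookkeeping concern is that intermediate tensors change shape across $\phi_{\rm up}$ (from $n$ tokens to $n'$ tokens, expanding toward the final scale), so I would verify that every intermediate size stays $\poly(n)$. Under the standing assumptions $h_i, w_i, c \leq \poly(n)$ and $p \leq \poly(n)$ this is immediate, since each $h_i' w_i' \leq \poly(n)$ and the total width $n' = \sum_i h_i' w_i' \leq \poly(n)$. No step is genuinely difficult—the argument is essentially ``constant-depth composition of constantly many $\mathsf{TC}^0$ circuits is $\mathsf{TC}^0$''—but the single-attention-layer packaging is the most delicate piece because it is the one component not already stated as a lemma in Section~\ref{sec:complexity_result}.
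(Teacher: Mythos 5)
Your proof takes essentially the same route as the paper: decompose $\mathsf{TF}$ into the per-layer blocks $\phi_{\rm up}$, $\mathsf{Attn}_i$, and $g_i$, invoke the earlier $\mathsf{TC}^0$ lemmas for each, and observe that stacking $O(1)$ constant-depth, $\poly(n)$-size uniform threshold circuits preserves constant depth and polynomial size. In fact your version is slightly tighter than the paper's at one point: the paper's proof cites a ``lemma'' for $\mathsf{Attn}_i$ that on inspection is actually the lemma for the attention \emph{matrix} $A$ alone, leaving the normalization $D^{-1}$ and the products $AXW_V$ unaddressed, whereas you explicitly package the full $\mathsf{Attn}(X)=D^{-1}AXW_V$ via iterated addition for $D$, parallel divisions for $D^{-1}$, and Lemma~\ref{lem:matrix_multiplication_tc0} for the remaining products. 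Your note on verifying that all intermediate tensor sizes after $\phi_{\rm up}$ stay $\poly(n)$ is also a sound, if routine, sanity check that the paper leaves implicit.
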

\begin{proof}
    By using the result of Lemma~\ref{lem:u_i_tc0}, we can apply a uniform threshold circuit of size bounded by $\poly(n)$ and $O(1)$ depth to simulate up -interpolation layer $\phi_{\rm up}$ defined in Definition~\ref{def:up_inter_layer}.

    By using the result of Lemma~\ref{lem:mlp_tc0} and Lemma~\ref{lem:layer_tc0}, we can apply a size bounded by $\poly(n)$ and $O(1)$ depth uniform threshold circuit to simulate $g_i$, for each $i \in [m]$.
    
    By using the result of Lemma~\ref{sec:single_attention_layer_compute}, we can apply a size bounded by $\poly(n)$ and $O(1)$ depth uniform threshold circuit to simulate $\mathsf{Attn}_i$ defined in Definition~\ref{def:single_layer_transformer}.

    To compute $\mathsf{TF}(X)$, we must compute $g_1,\dots,g_m$ ,$\mathsf{Attn}_1,\dots,\mathsf{Attn}_m$ and $m$ up-interpolation layers. Then, we can have that the size of the uniform threshold circuit is bounded by $\poly(n)$, and the total depth of the circuit is $O(1)$, which is due to $m = O(1)$.

    Thus, we complete the proof.
\end{proof}

\subsection{Computing Phase 2: Feature Map Reconstruction }\label{sec:phase_2_tc}
In this section, we show that the feature map reconstruction is within the computational power of $\mathsf{TC}^0$.
\begin{lemma}[Feature Map Reconstruction computation in $\mathsf{TC}^0$.]\label{lem:feature_map_reconstruct_tc0}
Assume the feature map reconstruction needs $k$ convolution kernel. Assume $k\leq \poly(n)$. Assuming the precision $p \leq \poly(n)$, then we can apply a uniform threshold circuit to simulate the feature map reconstruction operations. The circuit has size $\poly(n)$ and $O(1)$ depth.
\end{lemma}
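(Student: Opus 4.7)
\textbf{Proof proposal for Lemma~\ref{lem:feature_map_reconstruct_tc0}.} The plan is to decompose the feature map reconstruction phase into its atomic building blocks, invoke the circuit bounds already established for each block, and then appeal to closure of $\mathsf{TC}^0$ under composition of a constant number of constant-depth polynomial-size subcircuits. From Section~\ref{sec:phase_2}, the reconstruction phase consists of a sequence of Up Sample Blocks, where each Up Sample Block is the composition of an up-interpolation layer (Definition~\ref{def:up_inter_layer}) followed by a convolution block (Definition~\ref{def:conv_block}) containing at most $k \leq \poly(n)$ kernels applied in parallel.

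First, I would invoke Lemma~\ref{lem:u_i_tc0} to simulate each up-interpolation layer by a uniform threshold circuit of size $\poly(n)$ and depth $2d_{\rm std} + d_\oplus = O(1)$. Second, I would invoke Proposition~\ref{pro:mul_kernel_conv_proc} (which in turn relies on Lemma~\ref{lem:conv_tc0}) to simulate the multi-kernel convolution block by a uniform threshold circuit of $\poly(n)$ size and $O(1)$ depth, using the hypothesis $k \leq \poly(n)$ to argue that the parallelization across kernels does not blow up the size beyond $\poly(n)$. Since each kernel also has spatial footprint bounded by $n$, the preconditions of Lemma~\ref{lem:conv_tc0} are met.

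Third, I would chain these building blocks together. Because the reconstruction pipeline has a constant number of stages (each Up Sample Block is applied a constant number of times in the VAR architecture, independent of $n$), the resulting circuit is the serial composition of $O(1)$ subcircuits, each of $O(1)$ depth and $\poly(n)$ size. Using the standard fact that serial composition adds depths and adds sizes, the total depth remains $O(1)$ and the total size remains $\poly(n)$. Uniformity is preserved because each constituent circuit is $\mathsf{DLOGTIME}$-uniform and the wiring between stages is fixed and trivially computable.

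The main technical point to be careful about is not a deep obstacle but a bookkeeping matter: one must confirm that the intermediate feature maps produced at each stage continue to satisfy the hypotheses $h, w, c \leq \poly(n)$ and $p \leq \poly(n)$ required by Lemma~\ref{lem:u_i_tc0} and Lemma~\ref{lem:conv_tc0}, so that the polynomial size bound genuinely propagates through the pipeline rather than compounding. Since each up-interpolation only rescales spatial dimensions to values bounded by the final output resolution (polynomial in $n$) and each convolution preserves the channel and spatial polynomial bounds, this invariant holds throughout, and the lemma follows.
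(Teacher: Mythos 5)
Your proposal is correct and follows the same high-level strategy as the paper (decompose Phase~2 into its blocks, invoke the per-block $\mathsf{TC}^0$ bounds, and compose a constant number of constant-depth polynomial-size subcircuits). However, the paper's own proof is a one-liner that cites only Proposition~\ref{pro:mul_kernel_conv_proc}, whereas you also explicitly invoke Lemma~\ref{lem:u_i_tc0} to handle the up-interpolation layers that Section~\ref{sec:phase_2} says precede the convolutions; strictly speaking, your version is the more complete argument, since the paper's terse proof silently omits the up-interpolation step and the compositional bookkeeping (constant number of stages, polynomial bounds on intermediate dimensions, preservation of uniformity) that you spell out.
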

\begin{proof}
    This can be easily derived from Proposition~\ref{pro:mul_kernel_conv_proc}.
\end{proof}

\subsection{Computing Phase 3: VQ-VAE Decoder process}\label{sec:phase_3_tc}
In this section, we show that the VQ-VAE Decoder is within the computational power of $\mathsf{TC}^0$
\begin{lemma}[VQ-VAE Decoder process in $\mathsf{TC}^0$.]\label{lem:vqvae_tc0}
Assume the precision $p \leq \poly(n)$. Then, we can apply a uniform threshold circuit to simulate the VQ-VAE decoder process. The circuit has size $\poly(n)$ and $O(1)$ depth.

\end{lemma}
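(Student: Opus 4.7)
The plan is to decompose the VQ-VAE decoder into the three block types described in Section~\ref{sec:phase_3} (ResNet blocks, Attention blocks, Up-Sample blocks), show that each individual block can be simulated by a constant-depth, polynomial-size uniform threshold circuit by invoking the results already established earlier in Section~\ref{sec:complexity_result}, and then compose them. Concretely, I would first argue that a ResNet block reduces to a bounded number of convolution operations (Proposition~\ref{pro:mul_kernel_conv_proc}) followed by a residual addition, which by Part~3 of Lemma~\ref{lem:float_operations_TC} and Proposition~\ref{pro:mul_kernel_conv_proc} is simulated by a $\poly(n)$-size, $O(1)$-depth uniform threshold circuit. Any normalization or activation appearing inside the ResNet block is handled by Lemma~\ref{lem:layer_tc0} and, where exponentials or square roots appear, by Lemma~\ref{lem:exp} and Lemma~\ref{lem:sqrt}.

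Next, for the Attention block, I would note that the decoder's attention block is structurally the same self-attention operator formulated in Definition~\ref{def:attn_matrix} and Definition~\ref{def:single_layer_transformer}, so the attention matrix computation lemma (Section~\ref{sec:single_attention_layer_compute}) together with Lemma~\ref{lem:matrix_multiplication_tc0} already yields a uniform $\mathsf{TC}^0$ circuit of $\poly(n)$ size and $O(1)$ depth for the full attention block; any surrounding layer normalization or MLP follows from Lemma~\ref{lem:layer_tc0} and Lemma~\ref{lem:mlp_tc0}. For the Up-Sample block, which by the discussion in Section~\ref{sec:phase_3} is just an up-interpolation followed by one or more convolutions, I would chain Lemma~\ref{lem:u_i_tc0} with Proposition~\ref{pro:mul_kernel_conv_proc}; since each is $\poly(n)$-size and $O(1)$-depth, so is their composition.

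Finally, I would assemble the full decoder. Assuming the decoder is built from an $O(1)$ number of these blocks (consistent with the assumption $m = O(1)$ used in Lemma~\ref{lem:var_transformer_tc0}), the overall circuit is an $O(1)$-depth stack of $O(1)$-depth sub-circuits, hence $O(1)$ depth in total, and its size remains $\poly(n)$ under $p \leq \poly(n)$ and the standing polynomial bounds on spatial dimensions and channel counts. Uniformity is preserved under constant-depth composition of uniform circuits, so the resulting family lies in $\mathsf{DLOGTIME}$-uniform $\mathsf{TC}^0$.

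The main obstacle is essentially bookkeeping rather than a new technical barrier: the work has been front-loaded into Lemmas~\ref{lem:u_i_tc0}, \ref{lem:mlp_tc0}, \ref{lem:layer_tc0}, \ref{lem:conv_tc0}, and Proposition~\ref{pro:mul_kernel_conv_proc}. The one place where I would be careful is explicitly checking that the residual connections inside ResNet blocks and the post-attention skip connections only add a single layer of floating-point addition (handled by Part~1 of Lemma~\ref{lem:float_operations_TC}) so that the depth count remains constant across all stacked blocks, and that the number of kernels and blocks is indeed bounded by $\poly(n)$ so the total circuit size stays polynomial.
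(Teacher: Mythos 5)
Your proposal matches the paper's proof almost exactly: the same three-way decomposition into ResNet blocks, Attention blocks, and Up-Sample blocks, each reduced to the previously established results (Proposition~\ref{pro:mul_kernel_conv_proc} with Lemma~\ref{lem:float_operations_TC}, the attention/MLP/LN lemmas, and Lemma~\ref{lem:u_i_tc0}), then composed under an $O(1)$ bound on the number of blocks. If anything you are slightly more careful than the paper's terse proof, which for the Up-Sample block cites only Lemma~\ref{lem:u_i_tc0} without the accompanying convolution, reuses the full Lemma~\ref{lem:var_transformer_tc0} for the attention block rather than invoking the single-attention-layer result directly, and leaves both the residual-addition depth and the constant block-count assumption implicit where you make them explicit.
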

\begin{proof}
    Firstly, by using the result of  Proposition~\ref{pro:mul_kernel_conv_proc} and Lemma~\ref{lem:float_operations_TC}, we can simulate the ResNet blocks by using a size $\poly(n)$ and $O(1)$ depth uniform threshold circuit.

    Then, by using the result of Lemma~\ref{lem:var_transformer_tc0}, we can simulate the attention blocks by using a size $\poly(n)$ and $O(1)$ depth uniform threshold circuit.

    And, by using the result of Lemma~\ref{lem:u_i_tc0}, 
    we can simulate the Up Sample Blocks by using a size $\poly(n)$ and depth $O(1)$ uniform threshold circuit.

    By combing the result above, we have that a size $\poly(n)$ and $O(1)$ depth uniform threshold circuit can be applied to simulate the VQ-VAE decoder process.
\end{proof}

\subsection{Main Result}\label{sec:main_result}
We present our main result, which derives the circuit complexity limits for the VAR model.

\begin{theorem}[Circuit complexity of the VAR model.]\label{thm:main_theorem}
    Assuming precision $p \leq \poly(n)$, then we can apply a uniform threshold circuit to simulate the VAR model, where the circuit has size $\poly(n)$ and $O(1)$ depth.
\end{theorem}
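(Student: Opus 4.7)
The plan is to decompose the VAR model along the three-phase architecture laid out in Section~\ref{sec:model_formulation} and then invoke the phase-level simulation lemmas already established. Concretely, for any input to the VAR model the forward pass factors as the composition $\mathsf{VAR} = \mathsf{Dec} \circ \mathsf{Rec} \circ \mathsf{TF}$, where $\mathsf{TF}$ is the VAR Transformer of Definition~\ref{def:var_transformer}, $\mathsf{Rec}$ is the feature map reconstruction stage of Section~\ref{sec:phase_2}, and $\mathsf{Dec}$ is the VQ-VAE decoder of Section~\ref{sec:phase_3}. I would first state this decomposition explicitly, so that bounding the overall circuit reduces to bounding each factor and then bounding their composition.

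Next, I would apply Lemma~\ref{lem:var_transformer_tc0}, Lemma~\ref{lem:feature_map_reconstruct_tc0}, and Lemma~\ref{lem:vqvae_tc0} in turn. Each supplies a $\mathsf{DLOGTIME}$-uniform threshold circuit of $\poly(n)$ size and constant depth simulating the corresponding phase under the precision assumption $p \le \poly(n)$. I would list these three circuits as $\mathcal{C}_{\mathsf{TF}}$, $\mathcal{C}_{\mathsf{Rec}}$, $\mathcal{C}_{\mathsf{Dec}}$ with depths $d_{\mathsf{TF}}, d_{\mathsf{Rec}}, d_{\mathsf{Dec}} = O(1)$ and sizes bounded by three polynomials in $n$.

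The key step is composing these three circuits by wiring the output gates of $\mathcal{C}_{\mathsf{TF}}$ into the input gates of $\mathcal{C}_{\mathsf{Rec}}$, and similarly from $\mathcal{C}_{\mathsf{Rec}}$ into $\mathcal{C}_{\mathsf{Dec}}$. Since the sum of three polynomials is a polynomial and the sum of three constants is a constant, the composite circuit has size $\poly(n)$ and depth $d_{\mathsf{TF}} + d_{\mathsf{Rec}} + d_{\mathsf{Dec}} = O(1)$. Uniformity of the composition follows because $\mathsf{DLOGTIME}$-uniformity is preserved under serial composition of a constant number of $\mathsf{DLOGTIME}$-uniform circuits: a random-access machine can, on query to a gate of the composite, first determine in which sub-circuit the gate lies via an index comparison and then invoke the corresponding uniform description, all in logarithmic time.

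I do not expect a serious obstacle here, because all the substantive work (handling interpolation, convolution, attention, MLP, and LN under bounded-precision floating-point arithmetic) has already been discharged by Lemmas~\ref{lem:u_i_tc0}--\ref{lem:vqvae_tc0} using the primitives of Lemma~\ref{lem:float_operations_TC}, Lemma~\ref{lem:exp}, and Lemma~\ref{lem:sqrt}. The only mildly delicate point is bookkeeping: one should verify that the output format (precision, shape) of each phase matches the input format assumed by the next phase, so that no additional conversion circuitry is needed beyond what is already absorbed into the constant-depth bounds. Once that interface check is made, the conclusion that the VAR model lies in $\mathsf{DLOGTIME}$-uniform $\mathsf{TC}^0$ follows immediately.
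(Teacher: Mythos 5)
Your proposal is correct and takes essentially the same approach as the paper: both decompose the VAR forward pass into the three phases and invoke Lemma~\ref{lem:var_transformer_tc0}, Lemma~\ref{lem:feature_map_reconstruct_tc0}, and Lemma~\ref{lem:vqvae_tc0}, then compose the resulting constant-depth, polynomial-size uniform threshold circuits. The only difference is that you spell out the composition step, the additivity of depths and sizes, and the preservation of $\mathsf{DLOGTIME}$-uniformity under constant serial composition, all of which the paper leaves implicit in its one-line proof.
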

\begin{proof}
    This result directly comes from  Lemma~\ref{lem:var_transformer_tc0}, Lemma~\ref{lem:feature_map_reconstruct_tc0} and Lemma~\ref{lem:vqvae_tc0}.
\end{proof}
\section{Conclusion}\label{sec:conclusion}
This study provides a comprehensive theoretical analysis of VAR models, deriving key limits on their computational abilities. Our approach centers on examining the circuit complexity of various components of VAR models, from the up-interpolation layers and the convolution layers to the attention mechanism. Furthermore, we show that VAR can be expressed as uniform $\mathsf{TC^0}$ circuits. This finding is important because it exposes inherent constraints in the expressiveness of VAR models, despite their empirical effectiveness in visual generation.
\ifdefined\isarxiv
\bibliographystyle{alpha}
\bibliography{ref}
\else
\bibliography{ref}
\bibliographystyle{alpha}

\fi





\end{document}